\newcommand{\Rmnum}[1]{\expandafter\@slowromancap\romannumeral #1@}
\newtheorem{theorem}{Theorem}
\newtheorem{proposition}{Proposition}
\newtheorem{lemma}{Lemma}
\newtheorem{remark}{Remark}
\newtheorem{assumption}{Assumption}
\begin{document}
	\title{Force-coordination Control for Aerial Collaborative Transportation based on Lumped Disturbance Separation and Estimation}
	
	\author{Lidan Xu, 
		Hao Lu,
		Jianliang Wang, 
		Xianggui Guo,
		and Lei Guo, 
		\thanks{Manuscript received XXX, 202X; revised XXXX, 202X.}
		\thanks{This work was supported by the National Natural Science Foundation of China under Grants 62173024, 62273024, 
			 Zhejiang Natural Science Foundation under Grants LD21F030001, LZ22F030012, and  the Program for Changjiang Scholars and Innovative Research Team under Grant IRT\underline{\hspace{0.5em}}16R03.}
		\thanks{Lidan Xu and Lei Guo are with School of Cyber Science and Technology, Beihang University, Beijing, 100191, China.}
		\thanks{Hao Lu (*Corresponding author, E-mail: luhaojiqi@126.com) and Jianliang Wang are with Hangzhou Innovation Institute, Beihang University, Zhejiang, 310052, China.}
		\thanks{Xianggui Guo is with the School of Automation and Electrical Engineering, University of Science and Technology Beijing, Beijing, 100083, China.}}
	
	\markboth{xxxx}%
	{Lidan Xu \MakeLowercase{\textit{et al.}}: A Sample Article Using IEEEtran.cls for IEEE Journals}
	
	
	\maketitle
	
	\begin{abstract}
		This article studies the collaborative transportation of a cable-suspended pipe by two quadrotors.
		 A force-coordination control scheme is proposed, where a force-consensus term is introduced to average the load distribution between the quadrotors.
		 Since thrust uncertainty and cable force are coupled together in the acceleration channel, 
		 disturbance observer can only obtain the lumped disturbance estimate.
		 Under the quasi-static condition, a disturbance separation strategy is developed to remove the thrust uncertainty estimate for precise cable force estimation.
		The stability of the overall system is analyzed using Lyapunov theory.
		Both numerical simulations and indoor experiments using heterogeneous quadrotors validate the effectiveness of thrust uncertainty separation and force-consensus algorithm.
	\end{abstract}
	
	\begin{IEEEkeywords}
		collaborative transportation,
		force-coordination,
		formation control,
		disturbance separation.
	\end{IEEEkeywords}
	
	\section{Introduction}
	\IEEEPARstart{R}{ecent} years have witnessed increasing application in the area of drone delivery.
	The bottleneck problem of aerial transportation lies in the limitation of the payload capacity.
	Although using a larger vehicle may solve this problem,
	it is believed to be more costly and inefficient.
	As described in \cite{Carter}, when the size of a rotorcraft  increases to a certain point, 
	the growth in relative productivity becomes trivial.
	
	Multi-UAV collaboration is a potentially promising choice to increase the transportation capacity.
	On top of this, additional task redundancy, lower cost, and robustness to vehicle failure may also be provided \cite{mellinger2013cooperative, dongxiWang, villa2020survey,geng2020cooperative}.
	Generally, the UAV-payload connection types include active connections and passive connections \cite{zeng2020differential}.
	Active connection involves equipping the vehicle with a gripper to grasp and hold the payload rigidly \cite{manipulatorCooperation},
	while passive connection refers to suspending the payload through cables \cite{Arcak} or via a universal joint \cite{tagliabue2019robust}.
	The gripper attachment increases the mass and inertia of the system considerably and thereby makes the system respond slowly.
	In contrast, the cable suspension mechanism is more appealing for its low cost and flexible system structure.
	Therefore, we adopt the cable suspension mechanism for collaborative transportation in this research.
	
	The state of the art of control strategies for cable-suspended collaborative transportation can be divided into two groups \cite{qianlongHao}:  payload-based  design and  formation-based design.
	Payload-based design focuses on the trajectory of the payload, e.g., \cite{goodarzi2016stabilization, sreenath2013dynamics, Taeyoung2018}. 
	Although the precise attitude and position control of the payload can be realized,  
	the dynamic information of the payload is required for real-time feedback control, 
	which is hard to obtain in engineering practice.
	 In contrast,  in formation-based design,
	 only the state information of the aerial vehicles is needed.
	 When the vehicle group reaches its destination,
	 the payload is also supposed to reach the target area.
	 The validity and feasibility of such approach has been established via simulation  \cite{SHIRANI2019158} and experiment \cite{bacelar2020board},
	 but the cable forces on the quadrotors are ignored.

	 To implement formation-based robust collaborative transportation,
	 several control algorithms have been developed.
	 A distanced-based formation control algorithm for a team of quadrotors transporting a heavy object is presented in \cite{Marina}, 
	 which measures and resists the acceleration due to disturbances and rope tension using incremental nonlinear dynamic inversion control.
	 In \cite{klausen2018cooperative} and \cite{meissen2017passivity},
	 a passivity-based formation control strategy is proposed with adaptive compensation terms to eliminate the wind disturbance and the cable tension.
	 The energy passivity property of the quadrotors-payload system is established in \cite{Mohammadi},
	 where an adaptive damping term is used to dissipate the energy injected by the sudden perturbations.
	 
	 The studies mentioned above are all designed based on the rigid formation.
	 As a matter of fact,
	 maintaining a fixed formation for payload transportation is not necessary and
	 it is better to employ a flexible formation, which can adapt the vehicles to the complex and uncertain environment and tasks \cite{doakhan4222094cooperative}.  
	Force control-based approaches have been explored for collaborative payload transportation with flexible formation, e.g., force amplification \cite{wang2016force} and contact force regulation \cite{thapa2020cooperative}.
	 The so-called Force-Amplifying N-Robot Transport System (Force-ANTS) control framework is introduced in \cite{wang2016force} to achieve force-coordination among a group of ground robots.
	 The follower robots perceive the leader force by simply measuring the object's motion locally and then reinforce this intention,
	 which makes it possible to cooperatively transport heavy objects of various sizes without any communication network.
	 In \cite{thapa2020cooperative},
	 a new adaptive force-consensus algorithm is proposed to guarantee  the average load distribution among the vehicles using force/acceleration sensors.
	 This work can average the energy consumption among the UAVs and thereby extend the endurance of the entire team.
	 Cooperative manipulation of a cable-suspended payload with two aerial vehicles  is considered in \cite{Tognon},
	 where the role of the internal force is first studied and analyzed in depth.
	 Although simulation results have verified the effectiveness of the methods in \cite{thapa2020cooperative} and \cite{Tognon},
	 these methods cannot be applied directly in practice because model uncertainties are not considered.

	 In this article, we propose a force-coordination control strategy for cable-suspended collaborative transportation. Here the concept of force-coordination means that cable forces between the payload and aerial vehicles converge to the expected values cooperatively, which is believed to  play a fundamental role in more difficult aerial cooperative payload manipulations, such as swinging a payload \cite{swingpayload}.
	 The most critical step for applying force-coordination-based control is the accurate measurement of contact force.
	 In \cite{forceSensors1} and \cite{forceSensors2},  force sensors are installed to measure the cable tensions.
	 However, in addition to the high cost, the force sensor complicates the vehicle's structure and increases the  weight of the whole system,
	 whereas
	 force estimation is  more appropriate  for its low cost and convenience.
	 The existing external force estimation methods, e.g., disturbance observer (DO) \cite{Cuiyangyang}, extended state observer (ESO) \cite{ESO}, and unscented Kalman filter (UKF) \cite{UKF}, can only estimate the equivalent lumped disturbance rather than distinguish different disturbances in the same channel.
	 The cable tension is always coupled with multiple disturbances, like thrust uncertainty, wind force,  and  mass center offset, in the acceleration channel.
	 Therefore, it is not a straightforward task to estimate the cable force precisely. 
	 An attempt is made to estimate the contact force of rigidly connected payload  for admittance control in \cite{tagliabue2019robust}.
	To avoid the undesirable offset in the estimated force caused by wind and model uncertainties,
	 a Finite State Machine is employed to monitor the magnitude of the force and decides whether to reject or utilize the estimate for trajectory generation.
	 This strategy seems quite fascinating and practical for its robustness to disturbances, 
	 but essentially it only evaluates the quality of estimation and does not improve the force estimation accuracy.
	 
	This paper studies the collaborative transportation system composed of two aerial vehicles carrying a cable-suspended long pipe.
	The lengths of the cables are different and unknown, 
	so that we can treat this system as a heterogeneous coordination system.
	For quadrotor dynamics, among the multiple disturbances mixed with the cable force, thrust uncertainty is the primary one, which is the synthesis of uncertainties in the whole propulsion system. Uncertainties in the propulsion system include aerodynamic uncertainties and hardware uncertainties. Here aerodynamic uncertainties refer to the thrust coefficients, which are consistent for the same blades. However, hardware uncertainties vary from one-to-one, including motor degradation, blade damage, battery wear, electronic speed controller efficiency loss, and so on. Therefore, to acquire an accurate cable force estimate, it is necessary to get rid of the thrust uncertainty.
	The main contributions  are summarized in the following aspects:
	\begin{enumerate}
		\item A force-coordination control scheme is proposed for the collaborative transportation system. Different from the position-coordination control methods \cite{klausen2018cooperative, meissen2017passivity, Mohammadi}, cable forces instead of positions are used to regulate the formation and motion of the collaborative vehicles. When applied to the heterogeneous quadrotors with different cable lengths, the pipe can be aligned parallel to the ground under the force-consensus condition, which also means the equal share of the payload mass.
		\item A sensorless lumped disturbance separation and estimation strategy based on disturbance observer (DO) is developed. Here DO is used to estimate the lumped force disturbance for the nominal dynamic model of the quadrotor. Under the quasi-static condition, a separation mechanism is first introduced to separate the significant thrust uncertainty from the lumped force disturbance estimate, so that more precise estimate of the cable force can be obtained for force-coordination  control.
		\item Real-world flight tests are carried out to verify the effectiveness of the proposed force-coordination control algorithm. 
		To our best knowledge, such force-coordination test without force sensor has not  been reported in previous studies.
		The test results show that the thrust uncertainty separation performs as expected and the pipe is stabilized within the small range of $1^\circ$ to $3^\circ$ near the equilibrium.
		
	\end{enumerate}

\section{Problem Formulation and Notations}\label{section 2}
\subsection{Mathematical Preliminaries}
The special orthogonal group is denoted as
\begin{equation}
	\nonumber
	\mathsf{SO}(3) = \left\{\bm{\mathcal{A}} \in \mathbb{R}^{3 \times 3} | \bm{\mathcal{A}}^\top \bm{\mathcal{A}} = \bm{\mathcal{A}} \bm{\mathcal{A}}^\top = \bm{I}_3, \mathrm{det}(\bm{\mathcal{A}}) = 1\right\}
\end{equation}
where $ \bm{I}_3 \in \mathbb{R}^{3 \times 3}$ is the identity matrix.

The set of $3 \times 3$ skew-symmetric matrix is denoted as
\begin{equation}
	\nonumber
	\mathfrak{so}(3) = \{\bm{\mathcal{B}} \in \mathbb{R}^{3 \times 3} | \bm{\mathcal{B}}^\top = - \bm{\mathcal{B}}\}
\end{equation}
which corresponds to the Lie algebra of $\mathsf{SO}(3)$.

The two-sphere $\mathsf{S}^2$ is the set of all unit vectors in the Euclidean space $\mathbb{R}^3$, i.e.,
\begin{equation}
	\nonumber
	\mathsf{S}^2 = 	\{\bm{h} \in \mathbb{R}^3 | \bm{h}^\top \bm{h} = 1\}.
\end{equation}

The Euclidean norm of  a matrix $\bm{\mathcal{C}} \in \mathbb{R}^{m \times n}$ is defined as
\begin{equation}\nonumber
	\begin{aligned}
		\| \bm{\mathcal{C}} \| = \sqrt{\lambda_{\max} (\bm{\mathcal{C}}^\top \bm{\mathcal{C}})}
	\end{aligned}
\end{equation}
where $\lambda_{\max}$ is denoted as the largest eigenvalue of the matrix.

For vector $\bm{\omega} = [\omega_1, \omega_2, \omega_3]^\top$,  we define the mapping ${(\cdot)}^{\times} : \mathbb{R}^3 \to \mathfrak{so}(3)$ as
\begin{equation}\nonumber
	\begin{aligned}
		\bm{\omega}^{\times} = 
		\begin{bmatrix}
			0 & -\omega_3 & \omega_2 \\
			\omega_3 & 0 & -\omega_1\\
			-\omega_2 & \omega_1 & 0
		\end{bmatrix}
	\end{aligned}
\end{equation}
and the projection mappings $(\cdot)_{xy}: \mathbb{R}^3 \to \mathbb{R}^2$ and $(\cdot)_{z}: \mathbb{R}^3 \to \mathbb{R}$ as
\begin{equation}\nonumber
	\begin{aligned}
		\bm{\omega}_{xy} = \begin{bmatrix}
			\omega_1 & \omega_2
		\end{bmatrix}^\top, \quad
	\bm{\omega}_{z} = \omega_3.
	\end{aligned}
\end{equation}

\subsection{Configuration Description}
\begin{figure}[!hbt]
	\centering
	\includegraphics[width=3.5in]{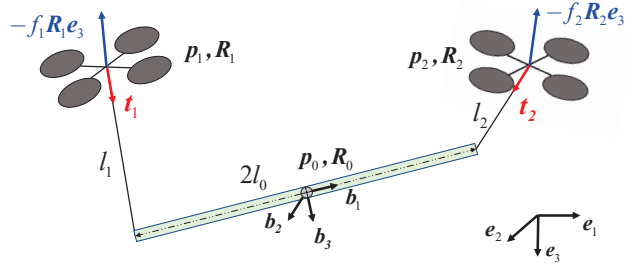}
	\caption{Schematic of the collaborative transportation system.}
	\label{fig_2}
\end{figure}
The quadrotors-payload structure  interconnected by massless cables is shown in Fig. \ref{fig_2},
where it is assumed that the payload is a round pipe and the cables are attached to the center of mass (CoM) of the quadrotors, without inducing additional torque on the quadrotors.
The north-east-down (NED) frame is chosen as the inertial frame $\mathcal{F_I} = \{\bm{e}_1, \bm{e}_2, \bm{e}_3\}$, 
with $\bm{e}_1 = \left[1, 0, 0\right]^{\top}$, $\bm{e}_2 = \left[0, 1, 0\right]^{\top}$ and $\bm{e}_3 = \left[0, 0, 1\right]^{\top}$.
The body-attached frame for the payload is $\mathcal{F}_{\mathcal{B}0} = \{\bm{b}_1, \bm{b}_2, \bm{b}_3\}$, 
with its origin at the CoM of the payload; $\bm{b}_1$ axis points along the pipe  to the  suspension point of the second cable;
 $\bm{b}_3$ axis is perpendicular to $\bm{b}_1$, locates in the vertical plane, and points downward; $\bm{b}_2$ axis completes the right-hand frame.
The half length of the pipe is denoted as $l_0$.
 The  mass distribution of the pipe is assumed to be  uniform,
so that the CoM position of the pipe coincides with its geometric center at $\bm{p}_0 \in \mathbb{R}^3$, and the attitude is described by the rotation matrix $\bm{R}_0 \in \mathsf{SO}(3)$.
In addition, the CoM position and the attitude of the $i$th quadrotor are denoted by $\bm{p}_i \in \mathbb{R}^3$ and $\bm{R}_i \in \mathsf{SO}(3)$, respectively.
The cables are assumed to be taut and the length of the $i$th cable $l_i$ is assumed to be fixed.

\subsection{System Dynamics}
The dynamic model for the payload is derived as \cite{sreenath2013dynamics}:
\begin{equation}\label{payload dynamics}
	\begin{aligned}
		\ddot{\bm{p}}_0 &= g \bm{e}_3 - \frac{1}{m_0} \bm{t}_1 - \frac{1}{m_0} \bm{t}_2\\
		\dot{\bm{R}}_0 &= \bm{R}_0 \bm{\Omega}_0^{\times}\\
		\dot{\bm{\Omega}}_0 &=	\bm{J}_0^{-1} \left[-\bm{\Omega}_0^{\times} \bm{J}_0 \bm{\Omega}_0 + l_0 \bm{e}_1^{\times} \bm{R}_0^\top \left( \bm{t}_2 - \bm{t}_1\right)\right]
	\end{aligned}
\end{equation}
where $g \in \mathbb{R}^{+}$ is the gravity constant; $m_0 \in \mathbb{R}^+$ and $\bm{J}_0 \in \mathbb{R}^{3 \times 3}$ are the mass and the inertia matrix of the payload, respectively; $\bm{t}_i \in \mathbb{R}^3$ is the cable force on the $i$th quadrotor;
$\bm{\Omega}_0 \in \mathbb{R}^3$ is the body angular rate of the payload.

We  consider cable force $\bm{t}_i$ and thrust uncertainty $\Delta f_i \in \mathbb{R}$ as the force disturbances for the quadrotors. 
The actual thrust and the command thrust for the $i$th quadrotor are denoted as $f_i \in \mathbb{R}^{+}$ and $f_{ic} \in \mathbb{R}^{+}$ respectively, and satisfy the relation $f_{i} = f_{ic} + \Delta f_i$.
Therefore, the dynamic model for the $i$th quadrotors is expressed as
\begin{subequations}
	\begin{equation}
	\label{system model}
	\begin{aligned}
		\ddot{\bm{p}}_i =  g \bm{e}_3 - \frac{f_{ic}+\Delta f_i }{m_i} \bm{R}_i \bm{e}_3  + \frac{1}{m_i} \bm{t}_i 
	\end{aligned}
\end{equation}
\begin{equation}
	\label{rotational dynamics}
	\begin{aligned}
		\dot{\bm{R}}_i = \bm{R}_i \bm{\Omega}_i^{\times}
	\end{aligned}
\end{equation}
\begin{equation}
	\begin{aligned}
		\dot{\bm{\Omega}}_i = \bm{J}_i^{-1} \left(-\bm{\Omega}_i^{\times} \bm{J}_i \bm{\Omega}_i + \bm{\tau}_i\right)
	\end{aligned}
\end{equation}
\end{subequations}
 where $m_i \in \mathbb{R}^+$ and $\bm{J}_i \in \mathbb{R}^{3 \times 3}$ are the mass and the inertia matrix of the $i$th quadrotor, respectively;
 $\bm{\Omega}_i \in \mathbb{R}^3$ is the body angular rate of the $i$th quadrotor;
 $\bm{\tau}_i \in \mathbb{R}^3$ is the torque input for the $i$th quadrotor.
  
   The inner-loop dynamics control is assumed to be sufficiently fast and accurate to track the desired attitude command. Thus, one can consider the outer-loop and the inner-loop separately, similar to \cite{klausen2018cooperative}.
   In this manner,
  define the control input $\bm{u}_i \in \mathbb{R}^3$ for the outer-loop dynamics \eqref{system model} as
    \begin{equation}\label{control input def}
  	\begin{aligned}
  		\bm{u}_i \triangleq - \frac{f_{ic}}{m_i} \bm{R}_i \bm{e}_3, \quad i = 1, 2
  	\end{aligned}
  \end{equation}
   which can be regarded as the desired translational  acceleration for the $i$th quadrotor to be designed later.
	In addition, the cable force $\bm{t}_i$ and the thrust uncertainty $\Delta f_i$ are treated as the lumped disturbance $\bm{d}_{i} = [d_{ix}, d_{iy}, d_{iz}]^\top \in \mathbb{R}^3$, i.e.,
	\begin{equation}\label{lumped disturbance}
		\bm{d}_i  \triangleq - \frac{\Delta f_i} {m_i} \bm{R}_{i}\bm{e}_3 + \frac{1 }{m_i} \bm{t}_{i}, \quad i = 1, 2.
	\end{equation}
 The translational model \eqref{system model} is rewritten as:
   \begin{equation}
   	\label{control model}
   	\begin{aligned}
   		\ddot{\bm{p}}_i &=  g \bm{e}_3 + \bm{u}_i + \bm{d}_i, \quad i = 1, 2.
   	\end{aligned}
   \end{equation}
   From the definition \eqref{control input def}, we can obtain the magnitude of the command thrust for the $i$th quadrotor as
   \begin{equation}
   			f_{ic} = - m_i \bm{u}_i^\top \bm{R}_i \bm{e}_3, \quad i = 1, 2.
   \end{equation}

   \subsection{Inner loop Control}
We adopt the method in \cite{pucci2015} to design an inner-loop attitude controller for the single quadrotor
to guarantee that the direction of the actual thrust converges to the direction of the desired acceleration $\bm{u}_i$ exponentially.

For the desired translational acceleration $\bm{u}_i$,
the direction is given by
\begin{equation}
	\bm{h}_{i d}  = \frac{\bm{u}_i}{\| \bm{u}_i \|} \in \mathsf{S}^2
\end{equation}
noting that $\|\bm{u}_i \| \neq 0$ for quadrotors.
Then the desired body angular rate $\bm{\Omega}_{i \bm{d}} \in \mathbb{R}^3$  for the rotational dynamics  \eqref{rotational dynamics} is designed as
\begin{equation}
	\bm{\Omega}_{i \bm{d}} = \left(k_z + \frac{\dot{\gamma}_i}{\gamma_i} \right) \bm{e}_3^{\times} \bm{R}_i^\top \bm{h}_{i \bm{d}}  + \left(\bm{I}_3 - \bm{e}_3 \bm{e}_3^\top\right) \bm{R}_i^\top \bm{h}_{i \bm{d}}^{\times} \dot{\bm{h}}_{i \bm{d}}
\end{equation}
where $\gamma_i = \sqrt{c + \| m_i \bm{u}_i\|^2 } \in \mathbb{R}^+$,
$c \in \mathbb{R}^{+}$ is a small constant,
and $k_z \in \mathbb{R}^+$  is the control gain for attitude tracking.

The torque input for angular rate tracking is then given by
\begin{equation}
	\begin{aligned}
		\bm{\tau}_i &= \bm{\Omega}_i^{\times} \bm{J}_i \bm{\Omega}_i - \bm{K}_{\bm{\Omega}} \left(\bm{\Omega}_i - \bm{\Omega}_{i \bm{d}}\right)
	\end{aligned}
\end{equation}
where $\bm{K}_{\bm{\Omega}} \in \mathbb{R}^{3 \times 3}$ is the control gain.
Detailed proof for exponential convergence can refer to \cite{pucci2015}.

\section{Position-Coordination Control}\label{section 3}
To transport the payload to the desired position, 
we first employ the  leader-follower formation control structure in \cite{Leader_Follower}, where  quadrotor 1 is the leader and knows the desired position, and quadrotor 2 is the follower to keep the formation. 

Denote the desired trajectory of  quadrotor 1 as $\bm{p}_{1 d} \left(t\right) = \left[p_{1dx}(t), p_{1dy}(t), p_{1dz}(t)\right]^\top \in \mathbb{R}^3$ and the desired relative position as $\bm{p}_{12d} (t)=  \left[p_{12dx}(t), p_{12dy}(t), p_{12dz}(t)\right]^\top \in \mathbb{R}^3$
which are generated by the upper-level motion planning algorithm.
Throughout this article, we assume that $\dot{\bm{p}}_{12 d}(t) = \ddot{\bm{p}}_{12d}(t) = \bm{0}_{3 \times 1}$,
i.e., the desired spatial formation is time-invariant.
In the rest of the paper, we often do not explicitly write the dependence
on $t$ of the variables for notation convenience.

\subsection{Control Objective}
Here the control objective is to develop control laws for the two vehicles to achieve the following behaviors:
\begin{itemize}
	\item Quadrotor 1 achieves the desired trajectory 
	\begin{equation}
\bm{p}_1 (t) - \bm{p}_{1d} (t) \to \bm{0}_{3 \times 1}.
	\end{equation} 
	\item The two quadrotors keep the spatial formation
	\begin{equation}
		\bm{p}_1 (t)- \bm{p}_2 (t)\to \bm{p}_{12d}.
	\end{equation}
\end{itemize}
\subsection{Lumped Disturbance Estimation}
	\begin{assumption}\label{assumption 1}
			There exists an unknown positive constant $\bar{d}$ such that
		the time derivative of $\bm{d}_i$ in \eqref{control model} is bounded, i.e.,
			$\|\dot{\bm{d}}_{i}(t)\| \le \bar{d}$.
	\end{assumption}
  \begin{remark}
  	The change rates of attitude angles, thrust uncertainty, and cable force
  	 are physically limited \cite{Chen2015disturbance}, 
  	which reveals that Assumption 1 coincides with the common
  	practice.
  \end{remark}
To estimate the lumped disturbance $\bm{d}_i (t)$ in \eqref{control model},
the disturbance observer is designed as \cite{Chen2015disturbance}
\begin{equation}
	\label{disturbance estimate}
	\begin{aligned}
		\dot{\bm{z}}_i &= -\iota \left(\bm{z}_i + \iota \dot{\bm{p}}_i + g \bm{e}_3 + \bm{u}_i \right)\\
		\hat{\bm{d}}_i &= \bm{z}_i + \iota \dot{\bm{p}}_i
	\end{aligned}
\end{equation}
where $\bm{z}_i \in \mathbb{R}^3$ is the auxiliary state,
$\iota \in \mathbb{R}^{+}$ is the observer gain,
and $\hat{\bm{d}}_i = [\hat{d}_{ix}, \hat{d}_{iy}, \hat{d}_{iz}]^\top \in \mathbb{R}^3$ denotes the disturbance estimate.
Define the disturbance estimation error as 
\begin{equation}\label{estimationerr}
\tilde{\bm{d}}_i = \hat{\bm{d}}_i - \bm{d}_i \in \mathbb{R}^3
\end{equation}
 and the error dynamics can be derived as
\begin{equation}\label{estimation error dynamics}
	\begin{aligned}
		\dot{\tilde{\bm{d}}}_i =& - \iota \tilde{\bm{d}}_i - \dot{\bm{d}}_i.
	\end{aligned}
\end{equation}
According to Assumption \ref{assumption 1},
the boundedness of $\tilde{\bm{d}}_i(t)$ can be established \cite{Chen2015disturbance} with  $\bar{\tilde{d}}$ being an unknown positive constant
\begin{equation}\label{disturbance bounded}
	\begin{aligned}
		\|\tilde{\bm{d}}_i (t) \| \le \bar{\tilde{d}}.
	\end{aligned}
\end{equation}

\subsection{Position-coordination-based Rigid Formation Control}

The leader-follower controllers for the two quadrotors are designed as 
\begin{equation}
	\label{formation controller1}
\begin{aligned}
	\bm{u}_1 =& -k_1 \tilde{\bm{p}}_{12}  - k_2 \dot{\tilde{\bm{p}}}_{12} + \bm{\pi}_1
	- g \bm{e}_3 - \hat{\bm{d}}_1\\
	\bm{u}_2 =& \quad k_1 \tilde{\bm{p}}_{12}  + k_2 \dot{\tilde{\bm{p}}}_{12} 
	- g \bm{e}_3 - \hat{\bm{d}}_2\\
\end{aligned}
\end{equation}
where  $k_1 \in \mathbb{R}^{+}$ and $k_2 \in \mathbb{R}^{+}$ are the controller gains for formation keeping;
$\tilde{\bm{p}}_{12} \in \mathbb{R}^{3}$ is the formation error 
\begin{equation}\label{formation error}
	\begin{aligned}
		\tilde{\bm{p}}_{12} &= \bm{p}_1 - \bm{p}_2 - \bm{p}_{12d}
	\end{aligned}
\end{equation}
and $\bm{\pi}_1 \in \mathbb{R}^{3}$  is the external input
\begin{equation} \label{external input}
	\begin{aligned}
	\bm{\pi}_1 &= -k_3 (\bm{p}_1 - \bm{p}_{1d}) - k_4(\dot{\bm{p}}_1 - \dot{\bm{p}}_{1d})
	\end{aligned}
\end{equation}
with $k_3 \in \mathbb{R}^{+}$ and $k_4 \in \mathbb{R}^{+}$ being the controller gains for reference trajectory tracking.

 \subsection{Stability Analysis}
 Denote the position tracking error and velocity tracking error of quadrotor 1 by $\tilde{\bm{p}}_1$ and $\dot{\tilde{\bm{p}}}_1$ respectively,
 i.e., $\tilde{\bm{p}}_1 = \bm{p}_1 - \bm{p}_{1d}$ and $\dot{\tilde{\bm{p}}}_1 = \dot{\bm{p}}_1 - \dot{\bm{p}}_{1d}$.
 \begin{theorem}
 	Consider two quadrotors carrying a suspended payload, modeled as \eqref{control model}, with the control laws \eqref{formation controller1},
 	and the disturbance estimate updated as \eqref{disturbance estimate}.
 	Suppose that Assumption \ref{assumption 1} holds. 
	 	Then the signals of the overall closed-loop system $(\tilde{\bm{p}}_1, \dot{\tilde{\bm{p}}}_1, \tilde{\bm{p}}_{12}, \dot{\tilde{\bm{p}}}_{12}, \tilde{\bm{d}}_1, \tilde{\bm{d}}_2)$ are uniformly ultimately bounded.
 \end{theorem}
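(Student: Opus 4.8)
The plan is to reduce the claim to a comparison-lemma argument for a coupled \emph{linear} error system driven by bounded exogenous signals, and then to establish uniform ultimate boundedness (UUB) through a single composite Lyapunov function. First I would substitute the controllers \eqref{formation controller1} into the translational model \eqref{control model}, using $\bm{d}_i - \hat{\bm{d}}_i = -\tilde{\bm{d}}_i$ from \eqref{estimationerr} to cancel the $g\bm{e}_3$ and estimated-disturbance feedforward. Writing $\bm{\pi}_1 = -k_3\tilde{\bm{p}}_1 - k_4\dot{\tilde{\bm{p}}}_1$ and using $\dot{\bm{p}}_{12d}=\ddot{\bm{p}}_{12d}=\bm{0}$, this yields the closed-loop error dynamics
\begin{equation}\nonumber
\begin{aligned}
\ddot{\tilde{\bm{p}}}_1 &= -k_3 \tilde{\bm{p}}_1 - k_4 \dot{\tilde{\bm{p}}}_1 - k_1 \tilde{\bm{p}}_{12} - k_2 \dot{\tilde{\bm{p}}}_{12} - \tilde{\bm{d}}_1 - \ddot{\bm{p}}_{1d},\\
\ddot{\tilde{\bm{p}}}_{12} &= -k_3 \tilde{\bm{p}}_1 - k_4 \dot{\tilde{\bm{p}}}_1 - 2k_1 \tilde{\bm{p}}_{12} - 2k_2 \dot{\tilde{\bm{p}}}_{12} - \tilde{\bm{d}}_1 + \tilde{\bm{d}}_2,
\end{aligned}
\end{equation}
which, augmented with the estimator error dynamics \eqref{estimation error dynamics}, form an autonomous linear system in the stacked state $(\tilde{\bm{p}}_1, \dot{\tilde{\bm{p}}}_1, \tilde{\bm{p}}_{12}, \dot{\tilde{\bm{p}}}_{12}, \tilde{\bm{d}}_1, \tilde{\bm{d}}_2)$ perturbed by the bounded signals $\dot{\bm{d}}_i$ (bounded by $\bar{d}$ through Assumption \ref{assumption 1}) and the reference acceleration $\ddot{\bm{p}}_{1d}$, which I would additionally take to be bounded, as is standard for trajectory tracking without feedforward.

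Next I would build the composite Lyapunov function $V = V_p + V_d$, with $V_d = \tfrac12(\|\tilde{\bm{d}}_1\|^2 + \|\tilde{\bm{d}}_2\|^2)$ and $V_p$ a quadratic form in the tracking and formation errors. Since each uncoupled subsystem is a PD-controlled double integrator, the natural building blocks are $\tfrac12\|\dot{\tilde{\bm{p}}}_1\|^2 + \tfrac{k_3}{2}\|\tilde{\bm{p}}_1\|^2$ and $\tfrac12\|\dot{\tilde{\bm{p}}}_{12}\|^2 + k_1\|\tilde{\bm{p}}_{12}\|^2$, augmented with small cross terms $\epsilon_1\,\tilde{\bm{p}}_1^\top \dot{\tilde{\bm{p}}}_1$ and $\epsilon_2\,\tilde{\bm{p}}_{12}^\top \dot{\tilde{\bm{p}}}_{12}$ so that the position errors enter $\dot{V}_p$ with a definite sign. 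Differentiating along the trajectories, the diagonal terms contribute $-k_4\|\dot{\tilde{\bm{p}}}_1\|^2$, $-2k_2\|\dot{\tilde{\bm{p}}}_{12}\|^2$ and, via the cross terms, negative quadratic terms in $\tilde{\bm{p}}_1$ and $\tilde{\bm{p}}_{12}$, while $\dot{V}_d$ supplies $-\iota\|\tilde{\bm{d}}_i\|^2$; all remaining contributions, namely the inter-subsystem coupling and the exogenous signals $\tilde{\bm{d}}_i$, $\dot{\bm{d}}_i$, $\ddot{\bm{p}}_{1d}$, I would dominate using Young's inequality. Collecting terms gives $\dot{V} \le -\alpha V + \beta$ with $\beta = O(\bar{d}^2) + O(\sup_t\|\ddot{\bm{p}}_{1d}\|^2)$, from which the comparison lemma yields UUB of the full state with an ultimate bound proportional to $\sqrt{\beta/\alpha}$.

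The main obstacle is the bidirectional coupling: the leader dynamics contain the formation error through $-k_1\tilde{\bm{p}}_{12} - k_2\dot{\tilde{\bm{p}}}_{12}$, while the formation dynamics contain the leader error through $-k_3\tilde{\bm{p}}_1 - k_4\dot{\tilde{\bm{p}}}_1$, so a purely diagonal Lyapunov argument fails and these cross terms must be dominated by the diagonal negative terms. This is equivalent to requiring the system matrix (identical on each axis) to be Hurwitz; a Routh--Hurwitz computation shows its characteristic polynomial $s^4 + (2k_2+k_4)s^3 + (2k_1+k_2k_4+k_3)s^2 + (k_1k_4+k_2k_3)s + k_1k_3$ has all-positive coefficients automatically and that the condition $a_3a_2 - a_1 > 0$ holds for all positive gains, but the remaining condition $a_1(a_2a_3 - a_1) > a_3^2 a_0$ is \emph{not} automatic. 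Hence I expect the UUB conclusion to require an accompanying gain inequality, or equivalently a constraint relating the weights $\epsilon_1,\epsilon_2$ to the gains, that guarantees negative definiteness of the quadratic form in $\dot{V}_p$.
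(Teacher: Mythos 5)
Your derivation of the closed-loop error dynamics coincides exactly with the paper's, and your overall route---reduce to a linear system in $(\tilde{\bm{p}}_1,\dot{\tilde{\bm{p}}}_1,\tilde{\bm{p}}_{12},\dot{\tilde{\bm{p}}}_{12})$ driven by the bounded inputs $\tilde{\bm{d}}$ and $\ddot{\bm{p}}_{1d}$, then apply a quadratic Lyapunov function to get uniform ultimate boundedness---is the same as the paper's. The only methodological difference is the Lyapunov construction: the paper stacks the errors into $\bm{\zeta}\in\mathbb{R}^{12}$, checks that the system matrix $\bm{A}$ is Hurwitz via its characteristic polynomial, and takes $V=\bm{\zeta}^\top\bm{P}\bm{\zeta}$ with $\bm{P}$ from the Lyapunov equation, whereas you assemble an explicit composite $V$ with cross terms and dominate the couplings by Young's inequality. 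The paper also keeps the observer error outside $V$: its dynamics \eqref{estimation error dynamics} are a cascade independent of the tracking errors, so $\|\tilde{\bm{d}}_i\|\le\bar{\tilde{d}}$ is established first and then enters $\dot{V}$ only as a bounded input, while you fold it into one composite function. Both variants are sound and buy essentially the same conclusion.

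The one substantive error is your claim that the remaining Routh--Hurwitz condition $a_1(a_2a_3-a_1)>a_3^2a_0$ is \emph{not} automatic, so that an additional gain inequality would be needed. It is in fact automatic for all positive gains: with $a_3=2k_2+k_4$, $a_2=2k_1+k_3+k_2k_4$, $a_1=k_1k_4+k_2k_3$, $a_0=k_1k_3$, a direct expansion gives
\begin{equation}\nonumber
a_1(a_2a_3-a_1)-a_3^2a_0 = k_2k_4\left[(2k_1-k_3)^2+2k_1k_3\right]+k_1^2k_4^2+2k_1k_2^2k_4^2+k_1k_2k_4^3+k_2^2k_3^2+2k_2^3k_3k_4+k_2^2k_3k_4^2>0.
\end{equation}
Equivalently, the undisturbed closed loop is a dissipative spring--damper network: with $\tilde{\bm{p}}_2=\bm{p}_2-(\bm{p}_{1d}-\bm{p}_{12d})$, the function $V_0=\tfrac12\|\dot{\tilde{\bm{p}}}_1\|^2+\tfrac12\|\dot{\tilde{\bm{p}}}_2\|^2+\tfrac{k_3}{2}\|\tilde{\bm{p}}_1\|^2+\tfrac{k_1}{2}\|\tilde{\bm{p}}_{12}\|^2$ satisfies $\dot{V}_0=-k_4\|\dot{\tilde{\bm{p}}}_1\|^2-k_2\|\dot{\tilde{\bm{p}}}_{12}\|^2$, and LaSalle's principle then shows $\bm{A}$ is Hurwitz for every positive choice of $k_1,\dots,k_4$. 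So your $\epsilon_1,\epsilon_2$ can always be chosen small enough and no gain condition beyond positivity is required; the theorem holds as stated. (The paper is itself vague on this point, saying only that the gains ``are chosen according to Routh--Hurwitz,'' but the condition is vacuous.)
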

\begin{proof}
	Since the boundedness of estimation error $\tilde{\bm{d}}_{i}$ is established in \eqref{disturbance bounded} and  independent of the boundedness of the tracking error and formation error,
	we only focus on the stability of signals $(\tilde{\bm{p}}_1, \dot{\tilde{\bm{p}}}_1, \tilde{\bm{p}}_{12}, \dot{\tilde{\bm{p}}}_{12})$. 
	
Substituting \eqref{formation controller1}  into \eqref{control model} yields 
\begin{equation}
	\begin{aligned}
			\ddot{\bm{p}}_1 &= -k_1  \tilde{\bm{p}}_{12} - k_2 \dot{\tilde{\bm{p}}}_{12} - k_3 \tilde{\bm{p}}_1 - k_4 \dot{\tilde{\bm{p}}}_1- \tilde{\bm{d}}_1\\
		\ddot{\bm{p}}_2 &=  \quad k_1  \tilde{\bm{p}}_{12} + k_2 \dot{\tilde{\bm{p}}}_{12} - \tilde{\bm{d}}_2.
	\end{aligned}
\end{equation}
Then the dynamics of tracking error  $\tilde{\bm{p}}_1$ is derived as
\begin{equation}
	\begin{aligned}
			\ddot{\tilde{\bm{p}}}_{1} =   -k_1  \tilde{\bm{p}}_{12} - k_2 \dot{\tilde{\bm{p}}}_{12} - k_3 \tilde{\bm{p}}_1 - k_4 \dot{\tilde{\bm{p}}}_1- \tilde{\bm{d}}_1 - \ddot{\bm{p}}_{1d}
	\end{aligned}
\end{equation}
and the dynamics of formation error $ \tilde{\bm{p}}_{12}$ is computed as
\begin{equation}
	\begin{aligned}
			\ddot{\tilde{\bm{p}}}_{12} &= \ddot{\bm{p}}_1 - \ddot{\bm{p}}_2\\
					&= -2k_1  \tilde{\bm{p}}_{12} - 2 k_2 \dot{\tilde{\bm{p}}}_{12} - k_3 \tilde{\bm{p}}_1 - k_4 \dot{\tilde{\bm{p}}}_1- \tilde{\bm{d}}_1 + \tilde{\bm{d}}_2.
	\end{aligned}
\end{equation}
Define the vector $\bm{\zeta} = [\tilde{\bm{p}}_1^\top,  \tilde{\bm{p}}_{12}^\top, \dot{\tilde{\bm{p}}}_1^\top, \dot{\tilde{\bm{p}}}_{12}^\top]^\top \in \mathbb{R}^{12}$ and the vector $\tilde{\bm{d}} = [\tilde{\bm{d}}_1^\top, \tilde{\bm{d}}_2^\top]^\top \in \mathbb{R}^6$,
yielding
\begin{equation}
		\label{zeta}
	\begin{aligned}
		\dot{\bm{\zeta}} = \bm{A} \bm{\zeta} + \bm{B}_1 \tilde{\bm{d}} + \bm{B}_2 \ddot{\bm{p}}_{1d}
	\end{aligned}
\end{equation}
where 
\begin{equation}
	\begin{aligned}
		\bm{A} &= 
		\begin{bmatrix}
			\bm{0}_{3 \times 3} & \bm{0}_{3 \times 3} & \bm{I}_{3} &\bm{0}_{3 \times 3}\\
			 \bm{0}_{3 \times 3} &  \bm{0}_{3 \times 3}  &  \bm{0}_{3 \times 3} & \bm{I}_{3}\\
			 -k_3 \bm{I}_{3} & -k_1 \bm{I}_{3} & -k_4\bm{I}_{3} & -k_2 \bm{I}_{3}\\
 			-k_3 \bm{I}_{3} & -2 k_1 \bm{I}_{3} & -k_4 \bm{I}_{3} & -2 k_2 \bm{I}_{3}
		\end{bmatrix} \in \mathbb{R}^{12 \times 12}\\
\bm{B}_1 &= 
\begin{bmatrix}
	\bm{0}_{3 \times 3} & \bm{0}_{3 \times 3} &- \bm{I}_{3} & - \bm{I}_{3}\\
	\bm{0}_{3 \times 3} & \bm{0}_{3 \times 3} & \bm{0}_{3 \times 3} & - \bm{I}_{3}
\end{bmatrix}^\top \in \mathbb{R}^{12 \times 6}\\
\bm{B}_2 &=
\begin{bmatrix}
	\bm{0}_{3 \times 3} & \bm{0}_{3 \times 3} &- \bm{I}_{3} & \bm{0}_{3 \times 3}
\end{bmatrix}^\top \in \mathbb{R}^{12 \times 3}.
	\end{aligned}
\end{equation}

The characteristic polynomial of the matrix $\bm{A}$ is
\begin{equation}
	\begin{aligned}
		p(s) &=  s^4 + (2 k_2 + k_4) s^3 + (2 k_1 + k_3 + k_2 k_4) s^2 \\
		&\quad + (k_1 k_4+ k_2 k_3) s + k_1 k_3.
	\end{aligned}
\end{equation}
The controller gains $k_1$, $k_2$, $k_3$, and $k_4$ are chosen
 according to Routh-Hurwitz stability criterion,
 so that all roots of the characteristic polynomial are  in  the negative half plane,
implying the negative definiteness of the matrix $\bm{A}$.
According to the Lyapunov equation,
given any $\bm{Q} > 0$,
there exists a unique $\bm{P} > 0$ satisfying
$\bm{P} \bm{A} + \bm{A}^\top \bm{P} = -\bm{Q}$.

Next, define a Lyapunov function as
\begin{equation}
	V = \bm{\zeta}^\top \bm{P} \bm{\zeta}.
\end{equation}
Then the derivative of $V$ is 
\begin{equation}
	\begin{aligned}
	\dot{V} &= -\bm{\zeta}^\top \bm{Q} \bm{\zeta} + 2 \bm{\zeta}^\top \bm{P} \bm{B}_1 \tilde{\bm{d}} + 2\bm{\zeta}^\top \bm{P} \bm{B}_2 \ddot{\bm{p}}_{1d} \\
	&\le -\bm{\zeta}^\top \bm{Q} \bm{\zeta} + 2 \|\bm{\zeta}\| \|\bm{P}\| \|\bm{B}_1\| \| \tilde{\bm{d}}\| +  2 \|\bm{\zeta}\| \|\bm{P}\| \|\bm{B}_2\| \| \ddot{\bm{p}}_{1d}\| \\
	&\le -\bm{\zeta}^\top \bm{Q} \bm{\zeta} + (\sqrt{10}+\sqrt{2}) \bar{\tilde{d}} \|\bm{P}\|  \|\bm{\zeta}\| + 2 \| \ddot{\bm{p}}_{1d}\| \|\bm{P}\|   \|\bm{\zeta}\| 
		\end{aligned}
\end{equation}
	where $\|\tilde{\bm{d}}\| \le \sqrt{2} \bar{\tilde{d}}$, $\| \bm{B}_1\| = \frac{\sqrt{5}+1}{2}$, and $\| \bm{B}_2 \| = 1$;
	$\ddot{\bm{p}}_{1d}$ is always bounded according to the preset trajectory planning.
	
Finally, from Lyapunov boundedness theory \cite{khalil2002nonlinear}, $\bm{\zeta}$ is uniformly ultimately bounded,
implying the boundedness of $(\tilde{\bm{p}}_1, \dot{\tilde{\bm{p}}}_1,  \tilde{\bm{p}}_{12}, \dot{\tilde{\bm{p}}}_{12})$.
\end{proof}

 \section{Force-Coordination Control} \label{section 4}
Different from the position-coordination control law \eqref{formation controller1},
 we further propose flexible formation control laws based on force-coordination.
 In this study,
 the two quadrotors share the weight of the load equally in force-consensus condition while keeping the formation in the horizontal plane.
 \subsection{Control Objective}
 In this section, a force-coordination term is incorporated into the vehicle control laws to achieve the following behaviors:
 \begin{itemize}
 	\item Quadrotor 1 achieves the desired position
 	\begin{equation}
	\bm{p}_1 (t)- \bm{p}_{1d} \to \bm{0}_{3 \times 1}. 
 	\end{equation}  
 Here the desired  position of quadrotor 1 is assumed fixed in this section, i.e., $\dot{\bm{p}}_{1d} = \ddot{\bm{p}}_{1d} = \bm{0}_{3 \times 1}$.
 	\item The two quadrotors keep the formation in the horizontal plane
 	\begin{equation}
 	\bm{p}_{1xy}(t) - \bm{p}_{2xy}(t) \to \bm{p}_{12dxy}.
 	\end{equation}
 	\item The two quadrotors achieve the equal cable forces,
 	 without knowing  the cable lengths
 	\begin{equation}
		t_{2z} - t_{1z} \to 0.
 	\end{equation}
 \end{itemize}

\subsection{Equilibrium Analysis}\label{Equilibrium Analysis}
This subsection aims to  analyze the equilibria of the payload corrsponding to the equal cable forces in the vertical direction, and  give an explanation of the role of the load internal force.
\begin{proposition}
	Consider the system composed of two quadrotors and a pipe-like suspended payload described in  Fig. \ref{fig_2}.
	Under the conditions that force-consensus in the vertical direction and the desired horizontal formation are achieved,
	if the internal force is non-zero,
	then the equilibrium configuration of the pipe is parallel to the ground.
\end{proposition}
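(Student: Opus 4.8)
The plan is to characterize the payload equilibrium directly from its dynamics \eqref{payload dynamics}, impose the two achieved control objectives (vertical force consensus and horizontal formation), and then read off the pipe orientation. At an equilibrium the payload is static, so $\ddot{\bm{p}}_0 = \bm{0}$, $\bm{\Omega}_0 = \bm{0}$, and $\dot{\bm{\Omega}}_0 = \bm{0}$; the achieved horizontal formation together with the fixed desired position of quadrotor~1 is what pins down the steady-state configuration and legitimizes treating it as a genuine static equilibrium. Substituting these into the translational equation gives the force balance $\bm{t}_1 + \bm{t}_2 = m_0 g \bm{e}_3$, and substituting into the attitude equation gives the moment balance $l_0 \bm{e}_1^{\times} \bm{R}_0^\top (\bm{t}_2 - \bm{t}_1) = \bm{0}$.

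First I would exploit the moment balance. Since $l_0 > 0$ and $\bm{e}_1^{\times} \bm{v} = \bm{0}$ holds if and only if $\bm{v}$ is parallel to $\bm{e}_1$, the moment condition is equivalent to $\bm{R}_0^\top (\bm{t}_2 - \bm{t}_1) = \lambda \bm{e}_1$ for some scalar $\lambda$, i.e. $\bm{t}_2 - \bm{t}_1 = \lambda \bm{b}_1$ with $\bm{b}_1 \triangleq \bm{R}_0 \bm{e}_1$ the unit vector along the pipe axis. Thus at equilibrium the difference of the two cable forces can only point along the pipe. Next I would bring in the vertical force consensus: reading the third component of the force balance gives $t_{1z} + t_{2z} = m_0 g$, and combining with the hypothesis $t_{2z} - t_{1z} \to 0$ yields $t_{1z} = t_{2z} = m_0 g/2$, so the vertical component of $\bm{t}_2 - \bm{t}_1$ vanishes. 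Substituting $\bm{t}_2 - \bm{t}_1 = \lambda \bm{b}_1$ then forces $\lambda\, b_{1z} = 0$, where $b_{1z} = \bm{e}_3^\top \bm{b}_1$ is the vertical component of the pipe axis.

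The final step is to identify the internal force with $\lambda$. I would observe that the null space of the map from $(\bm{t}_1, \bm{t}_2)$ to the payload wrench $\bigl(\bm{t}_1 + \bm{t}_2,\ l_0 \bm{b}_1^{\times}(\bm{t}_2 - \bm{t}_1)\bigr)$ (expressed in the inertial frame) is spanned by the antisymmetric axial pair $(\bm{t}_1, \bm{t}_2) = (-\bm{b}_1, \bm{b}_1)$, so the \emph{internal force} is exactly the axial coordinate $\tfrac{1}{2}\bm{b}_1^\top(\bm{t}_2 - \bm{t}_1) = \lambda/2$. Hence a non-zero internal force means $\lambda \neq 0$, and $\lambda\, b_{1z} = 0$ then forces $b_{1z} = 0$; the pipe axis $\bm{b}_1$ lies in the horizontal plane, i.e. the pipe is parallel to the ground.

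I expect the main obstacle to be this last identification: making rigorous exactly what ``internal force'' means for the two-point suspension and showing it coincides with the axial component $\lambda = \bm{b}_1^\top(\bm{t}_2 - \bm{t}_1)$, since the whole conclusion hinges on $\lambda \neq 0$. A secondary subtlety worth stating carefully is that the achieved horizontal formation does not enter the final algebra but is what justifies treating the configuration as a static equilibrium in the first place, thereby supplying the force and moment balances used above.
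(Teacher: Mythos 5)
Your proposal is correct and follows essentially the same route as the paper: extract the force and moment balances from the payload dynamics at equilibrium, deduce $\bm{t}_2 - \bm{t}_1 = \lambda\, \bm{R}_0\bm{e}_1$ with $\lambda$ the (signed) internal force, and use $t_{1z}=t_{2z}$ to force either $\lambda = 0$ or a horizontal pipe axis. The only difference is cosmetic: the paper restricts to the XZ plane and states the conclusion as $\bm{R}_0\bm{e}_1 = \pm\bm{e}_1$, whereas you work directly with the vertical component $b_{1z}=0$, which is marginally more general.
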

\begin{proof}
	When the pipe is at its equilibrium ($\ddot{\bm{p}}_0 = \bm{0}_{3 \times 1}$ and $\bm{\Omega}_0 = \bm{0}_{3 \times 1}$),
	 the following equations are derived from the dynamics \eqref{payload dynamics} as
	\begin{subequations}
		\begin{equation}\label{gravity}
			\bm{t}_1 + \bm{t}_2 -  m_0 g \bm{e}_3 = \bm{0}_{3 \times 1}
		\end{equation}
		\begin{equation}
			\label{equilibrium}
			\begin{aligned}
				\bm{e}_1^{\times} \bm{R}_0^\top \left(\bm{t}_2 -  \bm{t}_1\right) = \bm{0}_{3 \times 1}.
			\end{aligned}
		\end{equation}
	\end{subequations} 
	Solving \eqref{equilibrium}  yields
	\begin{equation}\label{cable force error}
		\bm{t}_1 -  \bm{t}_2 = t_0 \bm{R}_0 \bm{e}_1 
	\end{equation}
	where $t_0 \in \mathbb{R}$ is the so-called load internal force \cite{Tognon}.
	
	When the quadrotors and the payload are at the stable static equilibrium,
	the whole system can be modeled as a four-bar-linkage in the plane \cite{michael2009kinematics}.
	Without loss of generality,
	it is assumed that the whole system stays in the XZ  plane of NED frame.
	If the force-consensus condition in the vertical direction is achieved, i.e., $t_{1z} =  t_{2z}$,
	it can be inferred either
	\begin{equation}
		\label{t0=0}
		t_0 = 0
	\end{equation}
	or
	\begin{equation}
		\label{parallel}
		\left\{\begin{aligned}
			t_0 &\neq 0\\
			\bm{R}_0 \bm{e}_1 &= \pm \bm{e}_1.
		\end{aligned}
		\right.
	\end{equation}
	Here the expression \eqref{parallel} corresponds to the situation where the pipe is parallel to the ground. The proof is completed.
	\end{proof}

	Although both conditions \eqref{t0=0} and \eqref{parallel} can achieve force-consensus,
	under condition \eqref{t0=0} the equilibrium attitude for the pipe is not unique.
	To be specific, substituting $t_0 = 0$ into \eqref{cable force error}  and combining \eqref{gravity} yields $\bm{t}_1 = \bm{t}_2 = -\frac{m_0 g}{2}\bm{e}_3$,
	which means when the two cable forces are aligned with the gravity direction, the pipe can theoretically keep stable at any attitude. Obviously, the equilibrium configurations shown in Fig. \ref{fig_4} are not desirable. 

	\begin{figure}[!hbt]
		\centering
		\includegraphics[width=3.5in]{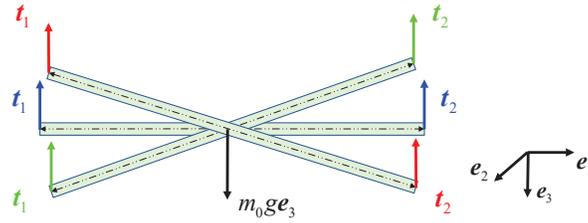}
		\caption{Multiple equilibria for $t_0 = 0$.}
		\label{fig_4}
	\end{figure}

	The internal force $t_0 < 0$ corresponds to the situation that the two quadrotors move closer to the middle of the pipe,
	which is quite dangerous due to the risk of drone collision.
	Therefore, we choose $t_0 > 0$ as the desired internal force shown in Fig. \ref{fig_5}, and the uniqueness of $t_0$ is guaranteed by the horizontal formation setting. 
	\begin{figure}[!hbt]
	\centering
	\includegraphics[width=3.5in]{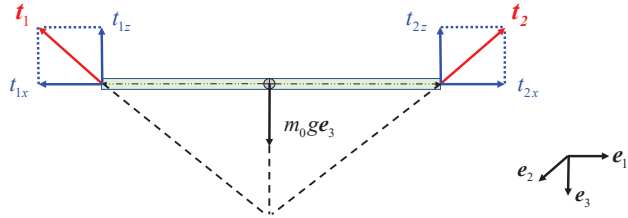}
	\caption{Unique equilibrium for $t_0 > 0$.}
	\label{fig_5}
\end{figure}
\begin{remark}
	In Fig. \ref{fig_5}, since the payload is at its equilibrium, all the forces on the payload intersect at a common point (planar pencil \cite{seo2016theory}),
	constituting three-component force balance.
\end{remark}
 
 \subsection{Force-coordination Formation Control Framework}
 If the cable lengths are known,
 we can configure out the desired trajectories of the quadrotors to keep the pipe parallel to the ground.
 However, without knowing the cable lengths,
 the average load distribution cannot be achieved by the preset configuration planning. 
 Therefore, a force-coordination formation control framework is proposed to overcome this limitation.
 The controller structure is shown in Fig. \ref{fig_6}.
 \begin{figure}[!hbt]
 	\centering
 	\includegraphics[width=3.5in]{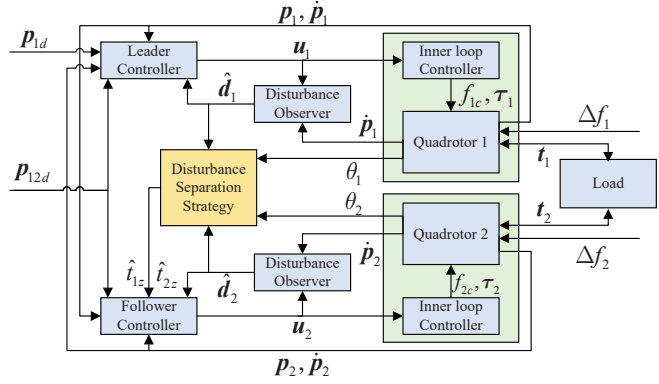}
 	\caption{Controller structure for force-coordination control.}
 	\label{fig_6}
 \end{figure}
 
 For the two quadrotors,
 the controllers  can be decomposed into two parts: 
 one part is to keep the horizontal formation and the other is to achieve force-consensus in the vertical direction.
 
 \subsubsection{Formation keeping in the horizontal plane}
 \
 \newline
 \indent The leader-follower control laws for keeping the formation  in the horizontal plane are designed based on \eqref{formation controller1} as
 \begin{equation}\label{horizontal controller}
 \begin{aligned}
 	\bm{u}_{1 xy} &=  \bm{G} ( -k_1 \tilde{\bm{p}}_{12} - k_2 \dot{\tilde{\bm{p}}}_{12} + \bm{\pi}_1
 	- g \bm{e}_3 - \hat{\bm{d}}_1 )\\
 	 	\bm{u}_{2 xy} &=  \bm{G} ( \quad k_1 \tilde{\bm{p}}_{12}  + k_2 \dot{\tilde{\bm{p}}}_{12} 
 	- g \bm{e}_3 - \hat{\bm{d}}_2)\\
 \end{aligned}
 \end{equation}
where $\bm{u}_{ixy}$ is the projection of the controller $\bm{u}_i$ in \eqref{formation controller1} into the horizontal plane, with
$\bm{G} = [\bm{e}_1, \bm{e}_2 ]^\top \in \mathbb{R}^{2 \times 3}$.
\subsubsection{Force-consensus in the vertical direction}
\
\newline
\indent According to the definition \eqref{lumped disturbance}, the vertical force-consensus error $t_{2z} - t_{1z}$ can be calculated as
\begin{equation}\label{force consensus error}
\begin{aligned}
	t_{2z} - t_{1z} &= \bm{e}_3^\top (m_2 \bm{d}_2 - m_1 \bm{d}_1 + \bm{\Xi})
\end{aligned}
\end{equation}
where  $\bm{\Xi} = \Delta f_2 \bm{R}_2 \bm{e}_3  - \Delta f_1 \bm{R}_1 \bm{e}_3 \in \mathbb{R}^3$ is the difference between the thrust uncertainties of two quadrotors.

Using the lumped disturbance estimates, the estimate for the vertical force-consensus error can be expressed as
\begin{equation}\label{estimate error}
	\begin{aligned}
		\hat{t}_{2z} - \hat{t}_{1z} =  \bm{e}_3^\top (m_2 \hat{\bm{d}}_2 - m_1 \hat{\bm{d}}_1 + \hat{\bm{\Xi}})
	\end{aligned}
\end{equation}
	where $\hat{t}_{iz} \in \mathbb{R}$ is the estimate of the $i$th cable force in the vertical direction and
	 $\hat{\bm{\Xi}} = [\hat{\Xi}_x, \hat{\Xi}_y, \hat{\Xi}_z] \in \mathbb{R}^3$ is the estimate of $\bm{\Xi}$.
	
	For quadrotor 1, the controller in the vertical direction is designed  only for height maintenance as below
	\begin{equation}\label{tracking controller1}
		\begin{aligned}
				u_{1z} = - k_3 (p_{1z} - p_{1dz}) - k_4 \dot{p}_{1z}  - g - \hat{d}_{1z}.
		\end{aligned}
	\end{equation}

For quadrotor 2, the controller in the vertical direction is designed to achieve vertical force consensus between two quadrotors as 
\begin{equation}	\label{vertical controller}
	\begin{aligned}
		u_{2z} =& -k_4 \dot{p}_{2z}  -g -\hat{d}_{2z} + k_f (\hat{t}_{2z} - \hat{t}_{1z})\\
	=&
		-k_4 \dot{p}_{2z}  -g -\hat{d}_{2z} + k_f (m_2 \hat{d}_{2z} - m_1 \hat{d}_{1z} + \hat{\Xi}_z )
		\end{aligned}
\end{equation}
where 
$k_f \in \mathbb{R}^{+}$ is the force-consensus gain.

In summary,
the whole leader-follower controller for the $i$th quadrotor is expressed as
\begin{equation}
\label{force controller 2}
	\bm{u}_i = 
	\begin{bmatrix}
		\bm{u}_{i xy}^\top &	u_{i z}
	\end{bmatrix}^\top
\end{equation}
where $\hat{\Xi}_z$ in $u_{2z}$ cannot be obtained directly from disturbance estimation,
and the detailed estimation method for $\hat{\Xi}_z$ will be developed  in the next subsection.  
\begin{remark}
	More complex manipulations of the pipe beyond averaging load distribution through force-consensus can be realized using the proposed force-coordination control framework by pre-planned force trajectories for the vehicles, which is similar to the method in  \cite{Tognon}. 
\end{remark}

\subsection{Disturbance Separation under Quasi-static Condition}
Quadrotors are underactuated systems that need to rotate to adjust their thrust directions.
Therefore, we can introduce the horizontal force balance equations under the quasi-static condition to obtain $\Delta \hat{f}_i$, and then separate it from the lumped disturbance estimate $\hat{\bm{d}}_i$ for the precise estimate of the cable force $\bm{t}_{i}$.
According to equation \eqref{estimate error},
the better the estimation for $\hat{\Xi}_z$,
the smaller the force-consensus error.
The quasi-static condition refers to the situation that the velocity of the payload changes slowly, i.e.,
\begin{equation}
	\label{balance}
	\begin{aligned}
		\ddot{\bm{p}}_0 \approx \bm{0}_{3 \times 1} 
	\end{aligned}
\end{equation}
which is reasonable during the transportation process.

Without loss of generality,
the quadrotors-payload structure is assumed to stay in the XZ plane of NED frame under  the quasi-static condition in this scenario.
 Substituting $\ddot{\bm{p}}_0 = \bm{0}_{3\times 1}$ into the payload dynamics \eqref{payload dynamics} yields  
\begin{equation}
	\label{XZ balance}
	\begin{aligned}
		t_{1x} + t_{2x} &= 0\\
		t_{1z} + t_{2z} &= m_0 g.
	\end{aligned}
\end{equation}

To be explicit,
according the definitions in \eqref{lumped disturbance} and \eqref{estimationerr}, the disturbance estimate $\hat{\bm{d}}_i$ satisfies the following relation:
\begin{equation}
	\label{quasi-static condition}
	\begin{aligned}
		\hat{\bm{d}}_i &= \bm{d}_i + \tilde{\bm{d}}_i\\
		&= -\frac{\Delta f_i}{m_i} \bm{R}_i \bm{e}_3 + \frac{1}{m_i} \bm{t}_i + \tilde{\bm{d}}_i.
	\end{aligned}
\end{equation}
The attitude of the $i$th quadrotor can be represented by the Euler angles $(\phi_i, \theta_i, \psi_i)$, and the rotation matrix $\bm{R}_i$ can be expressed as 
\begin{equation}
	\begin{aligned}
	\bm{R}_i
	&= \bm{R}_z(0) \bm{R}_y(\theta_i) \bm{R}_x(\phi_i)\\
	&= \begin{bmatrix}
		\cos \theta_i & \sin \phi_i \sin \theta_i & \cos \phi_i \sin \theta_i\\
		0 & \cos \phi_i & - \sin \phi_i\\
		-\sin \theta_i & \sin \phi_i \cos \theta_i  & \cos \phi_i \cos \theta_i
	\end{bmatrix}
	\end{aligned}
\end{equation}
where $\psi_i$ is set as zero.
Expanding \eqref{quasi-static condition} yields
   \begin{equation}
   	\label{balance condition}
   	\begin{aligned}
   		\begin{bmatrix}
   			\hat{d}_{i x} \\
   			\hat{d}_{i z}
   		\end{bmatrix}
   	=
   	\frac{1}{m_i}
   	\begin{bmatrix}
   		-\Delta f_i \sin \theta_i \cos \phi_i + t_{i x} + m_i \tilde{d}_{ix}\\
   		-\Delta f_i \cos \theta_i \cos \phi_i + t_{i z} + m_i \tilde{d}_{iz}
   	\end{bmatrix}.
   	\end{aligned}
   \end{equation}
For  notation simplicity, the following substitutions are adopted
\begin{equation}
	\begin{aligned}
		\hat{\Delta}_x &= m_1 \hat{d}_{1x} + m_2 \hat{d}_{2x}\\
		\hat{\Delta}_z &= m_1 \hat{d}_{1z} + m_2 \hat{d}_{2z} - m_0 g\\
		\tilde{\Delta}_x &= m_1 \tilde{d}_{1x} + m_2 \tilde{d}_{2x}\\
			\tilde{\Delta}_z &= m_1 \tilde{d}_{1z} + m_2 \tilde{d}_{2z}
	\end{aligned}
\end{equation}
where $\hat{\Delta}_x$ and $\hat{\Delta}_z$ denote the estimate results,
$\tilde{\Delta}_x$ and $\tilde{\Delta}_z$ denote the estimation errors.

Combining \eqref{XZ balance} and \eqref{balance condition},
 the thrust uncertainties under the quasi-static condition can be obtained as
\begin{equation}\label{thrust uncertainties}
	\begin{aligned}
		\Delta f_1 &= \frac{\hat{\Delta}_x -\hat{\Delta}_z \tan \theta_2 -\tilde{\Delta}_x + \tilde{\Delta}_z \tan \theta_2 }{  (\cos \theta_1 \tan \theta_2 -\sin \theta_1) \cos \phi_1}\\
			\Delta f_2 &= \frac{\hat{\Delta}_x - \hat{\Delta}_z \tan \theta_1  - \tilde{\Delta}_x + \tilde{\Delta}_z \tan \theta_1}{(\cos \theta_2 \tan \theta_1 - \sin \theta_2)\cos \phi_2 }
	\end{aligned}
\end{equation}
so that $\Xi_z$ is derived as
\begin{equation}
	\label{inconsistency}
	\begin{aligned}
		\Xi_z = \frac{\left(\hat{\Delta}_z - \tilde{\Delta}_z\right) \left(\tan \theta_1 + \tan \theta_2\right) - 2 \left(\hat{\Delta}_x - \tilde{\Delta}_x\right)}{\tan \theta_2 - \tan \theta_1}.
	\end{aligned}
\end{equation}
Then the estimates for the thrust uncertainties $\Delta f_1$ and $\Delta f_2$ can be obtained by removing the estimation errors $\tilde{\Delta}_x$ and $\tilde{\Delta}_z$ from equation \eqref{thrust uncertainties} as
\begin{equation}
	\begin{aligned}
			\Delta \hat{f}_1 &= \frac{\hat{\Delta}_x -\hat{\Delta}_z \tan \theta_2}{  (\cos \theta_1 \tan \theta_2 -\sin \theta_1 )\cos \phi_1}\\
		\Delta \hat{f}_2 &= \frac{\hat{\Delta}_x - \hat{\Delta}_z \tan \theta_1}{(\cos \theta_2 \tan \theta_1 - \sin \theta_2) \cos \phi_2 }.
	\end{aligned}
\end{equation}
Based on equation \eqref{inconsistency}, the estimate $\hat{\Xi}_z$ and the estimation error   $\tilde{\Xi}_z$ can be expressed separately as
\begin{equation}
	\label{inconsistency estimate}
	\begin{aligned}
		\hat{\Xi}_z = \frac{\left(\tan \theta_1 + \tan \theta_2\right)\hat{\Delta}_z - 2\hat{\Delta}_x}{\tan \theta_2 - \tan \theta_1}
	\end{aligned}
\end{equation}
and
\begin{equation}\label{xi}
	\begin{aligned}
		\tilde{\Xi}_z 
		&= \frac{\left(\tan \theta_1 + \tan \theta_2\right) \tilde{\Delta}_z - 2 \tilde{\Delta}_x}{\tan \theta_2 - \tan \theta_1}.
	\end{aligned}
\end{equation}
The vertical cable force estimate $\hat{t}_{iz}$ can be calculated according to equation \eqref{balance condition} as
\begin{equation}
	\begin{aligned}
		\hat{t}_{iz} = m_i \hat{d}_{iz} + \Delta \hat{f}_i \cos \theta_i \cos \phi_i
	\end{aligned}
\end{equation}
which equals to the actual vertical cable force $t_{iz}$ when the estimation error $\tilde{d}_{iz}$ converges to zero.

Finally, the controller \eqref{vertical controller} for force-consensus in the vertical direction is made feasible under the quasi-static condition.

\begin{remark}
	Assuming that precise disturbance estimation is achieved, i.e.,
	$\tilde{\bm{d}}_i \approx \bm{0}_{3 \times 1}$,
	there are six unknown variables $\Delta f_i$, $t_{ix}$ and $t_{iz}$, $i = 1, 2$, which can be solved from six equations by combining equation \eqref{XZ balance} and equation \eqref{balance condition}.
\end{remark}
\subsection{Stability Analysis}
For the stability of quadrotor 1 in the vertical direction,
by substituting \eqref{tracking controller1} into \eqref{control model}, the tracking error dynamics satisfies
\begin{equation}\label{exponential stability}
	\begin{aligned}
		\ddot{\tilde{p}}_{1z} = -k_4 \dot{\tilde{p}}_{1z} -k_3\tilde{p}_{1z} - \tilde{d}_{1z}
	\end{aligned}
\end{equation}
from which the exponential convergence of $\tilde{p}_{1z}$ can be obtained when the disturbance estimation error $\tilde{d}_{1z} \approx 0$. Therefore, the height of quadrotor 1 can be held.

As is shown in Fig. \ref{fig_7}, the pitch angle of the pipe is denoted as $\theta_0$,
and the desired position of quadrotor 2 is denoted as $p_{2d}$,
corresponding to zero pitch angle of the pipe.
When the height of quadrotor 1 and the desired horizontal relative position are fixed,
$p_{2d}$ is a unique equilibrium according to the analysis in Section \ref{Equilibrium Analysis},
so that the desired vertical position of quadrotor 2 can be seen as fixed,
i.e., $\dot{p}_{2dz} = \ddot{p}_{2dz} = 0$.
\begin{figure}[!hbt]
	\centering
	\includegraphics[width=3.5in]{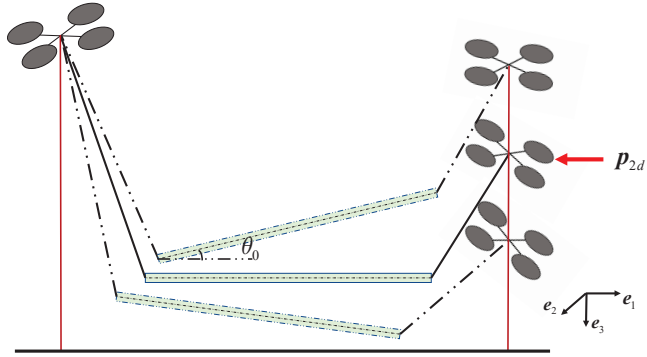}
	\caption{Force-consensus regulation in the vertical direction.}
	\label{fig_7}
\end{figure}

	We require that the desired horizontal  relative position between the two quadrotors satisfies
	\begin{equation}\label{formation setting}
		\| \bm{p}_{12dxy} \| > 2 l_0
	\end{equation} 
	so that the internal force in the pipe is always positive $(t_0 > 0)$.
This formation configuration also guarantees that
	 the pitch angles for the two quadrotors, $\theta_1$ and $\theta_2$, have lower and upper bounds,  i.e.,
	\begin{equation}
		\label{theta assumption}
		\begin{aligned}
			0 <& \underline{\theta}_1 \le \theta_1 \le \overline{\theta}_1 < \frac{\pi}{2}\\
			-\frac{\pi}{2} <& \underline{\theta}_2 \le \theta_2 \le \overline{\theta}_2 < 0.
		\end{aligned}
	\end{equation}

	 Under the quasi-static condition, 
	 the cable forces can be computed  as \cite{Tognon}
	 \begin{equation}
	 	\begin{aligned}
	 		\bm{t}_1 &= 
	 		\left[	\frac{t_0 \cos \theta_0 }{2}, 0, \frac{-t_0 \sin \theta_0 + m_0 g}{2} 
	 		\right]^\top\\
 		\bm{t}_2 &=
 		\left[	-\frac{ t_0 \cos \theta_0}{2}, 0, \frac{t_0 \sin \theta_0 + m_0 g}{2}
 		\right]^\top.
	 	\end{aligned}
	 \end{equation}
The following equation is obtained in the vertical direction
 \begin{equation}
 	\begin{aligned}
 	t_{2z} - t_{1z} = t_0 \sin \theta_0.
 	\end{aligned}
 \end{equation}
Then using \eqref{force consensus error}, \eqref{estimate error}, and \eqref{xi}, the force-consensus error in the vertical direction is estimated as
\begin{equation}\label{force consensus estimate}
	\begin{aligned}
		\hat{t}_{2z} - \hat{t}_{1z} 
			&=  t_{2z} - t_{1z} + m_2 \tilde{d}_{2z} - m_1 \tilde{d}_{1z} + \tilde{\Xi}_z\\
			&=  t_0 \sin \theta_0 + \bm{C} \tilde{\bm{d}}
	\end{aligned}
\end{equation}
where 
$
\bm{C} = \left[ \frac{-2 m_1}{\tan \theta_2 - \tan \theta_1}, 0, \frac{2 m_1 \tan \theta_1}{\tan \theta_2  -\tan \theta_1}, \frac{-2 m_2}{\tan \theta_2 - \tan \theta_1}, 0, \right.\\ 
\left. \frac{2 m_2 \tan \theta_2}{\tan \theta_2 - \tan \theta_1}\right]
$ and $\tilde{\bm{d}}$ is notated in \eqref{zeta}.

The vertical position and vertical velocity tracking error of quadrotor 2 are denoted by
the notations $(\tilde{p}_{2z}, \dot{\tilde{p}}_{2z} )$.
	 Combining \eqref{control model}, \eqref{vertical controller}, and \eqref{force consensus estimate},
	 the tracking error dynamics for quadrotor 2 in the vertical direction is 
	 \begin{equation}
	 	\label{p2z}
	 	\begin{aligned}
			 		\ddot{\tilde{p}}_{2z} 
		 		&= -k_4 \dot{\tilde{p}}_{2z} - \tilde{d}_{2z} + k_f t_0 \sin \theta_0 + k_f \bm{C} \tilde{\bm{d}}. 
	 	\end{aligned}
	 \end{equation}
\begin{lemma}\label{lemma1}
$t_0 \sin \theta_0$ and  $\tilde{p}_{2z}$ are negatively correlated and satisfy the following equation
\begin{equation}
	t_0 \sin \theta_0 = -\sigma \left(\tilde{p}_{2z}\right)
\end{equation}
where $\sigma (x)$ is a strictly increasing function with $\sigma\left(0\right) = 0$. Moreover, the slope of $\sigma (x)$ satisfies 
\begin{equation}
0 < \underline{\sigma} < \frac{d \sigma (x)}{d x} < \overline{\sigma}
\end{equation}
where $\underline{\sigma} \in \mathbb{R}^{+}$ and $\overline{\sigma} \in \mathbb{R}^{+}$ are the constant lower and upper bounds.
\end{lemma}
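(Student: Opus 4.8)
The plan is to reduce the claim to a monotonicity statement about a single scalar and then close it by kinematic and static analysis of the planar linkage. Combining the quasi-static force balance $t_{1z}+t_{2z}=m_0 g$ from \eqref{XZ balance} with the identity $t_{2z}-t_{1z}=t_0\sin\theta_0$ established just before \eqref{force consensus estimate}, I would first rewrite the target quantity as $t_0\sin\theta_0 = 2t_{2z}-m_0 g$, so that $\sigma(\tilde{p}_{2z}) \triangleq -t_0\sin\theta_0 = m_0 g - 2t_{2z}$. Two facts follow at once: at $\tilde{p}_{2z}=0$ the pipe is horizontal ($\theta_0=0$), the two quadrotors share the load equally with $t_{2z}=m_0 g/2$, and hence $\sigma(0)=0$; and, since $m_0 g$ is constant, $\frac{d\sigma}{d\tilde{p}_{2z}} = -2\,\frac{dt_{2z}}{d\tilde{p}_{2z}}$. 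Thus the lemma is equivalent to showing that the vertical tension carried by quadrotor~2 is a smooth, strictly \emph{decreasing} function of its own downward displacement, with derivative bounded between two negative constants.

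Next I would supply the kinematics. Because the height of quadrotor~1 is held by \eqref{exponential stability} and the horizontal relative position is fixed, the planar mechanism of Section~\ref{Equilibrium Analysis} — the two taut cables, the rigid pipe, and the fixed segment joining the suspension anchors — has a single degree of freedom, so the pipe pitch is a smooth function $\theta_0=\theta_0(\tilde{p}_{2z})$ with $\theta_0(0)=0$. A geometric inspection shows that lowering quadrotor~2 (increasing $\tilde{p}_{2z}$ in the NED frame) carries the $\bm{b}_1$ end of the pipe downward and therefore drives $\theta_0$ strictly negative; hence $\theta_0(\cdot)$ is strictly decreasing. The angle bounds \eqref{theta assumption} confine the mechanism to a compact configuration interval bounded away from the folding singularity $\theta_1=\theta_2$, on which $\theta_0(\cdot)$ and its derivative are continuous and nonvanishing.

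Finally I would bring in the statics and the bounds. Writing $t_{2z}=\frac{1}{2}\left(t_0\sin\theta_0 + m_0 g\right)$ with $t_0>0$ guaranteed by the formation setting \eqref{formation setting}, the sign argument above makes $\sigma=m_0 g - 2t_{2z}$ vanish at the origin and increase as quadrotor~2 descends, i.e.\ quadrotor~2 sheds vertical load to quadrotor~1 as it drops below the horizontal equilibrium. Over the compact operating interval fixed by \eqref{theta assumption}, every quantity is smooth and bounded, so $\frac{d\sigma}{d\tilde{p}_{2z}}$ is continuous and therefore attains a finite maximum $\overline{\sigma}$ and a minimum $\underline{\sigma}$. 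The main obstacle is establishing that this minimum is \emph{strictly positive} over the full operating range rather than merely near the equilibrium: since both the internal-force magnitude $t_0$ and the pitch $\theta_0$ vary with $\tilde{p}_{2z}$, one must rule out a cancellation in $\frac{d}{d\tilde{p}_{2z}}(t_0\sin\theta_0)$, equivalently verify that $t_{2z}$ decreases monotonically throughout; here the positivity of the internal force and the away-from-singularity bounds \eqref{theta assumption} are exactly what prevent such degeneracy, and once the linkage map $\theta_0(\tilde{p}_{2z})$ is shown to be a strictly monotone diffeomorphism the remaining estimates are routine.
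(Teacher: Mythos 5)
Your reduction $\sigma(\tilde p_{2z}) = m_0 g - 2 t_{2z}$ and the observation that $\sigma(0)=0$ at the horizontal equilibrium are correct, but they only restate the problem: since $t_{2z}=\tfrac12\left(t_0\sin\theta_0+m_0 g\right)$, showing that $t_{2z}$ is strictly decreasing with uniformly bounded slope is verbatim the original claim about $t_0\sin\theta_0$. The substantive content of the lemma is precisely the step you defer as ``routine'': ruling out cancellation between $\frac{dt_0}{d\tilde p_{2z}}\sin\theta_0$ and $t_0\cos\theta_0\frac{d\theta_0}{d\tilde p_{2z}}$, and extracting \emph{constant} bounds $\underline\sigma,\overline\sigma$. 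Positivity of $t_0$ and the angle bounds do not by themselves prevent such cancellation; one has to compute. The paper's proof does exactly this: writing $k=m_0 g/t_0$, it implicitly differentiates the loop-closure constraint $l_1\sin\alpha+l_2\sin\beta+2l_0\cos\theta_0=s$ to get $dk/d\theta_0$, combines this with $h_2=l_2\cos\beta+2l_0\sin\theta_0-l_1\cos\alpha+h_1$, and arrives at
\[
\frac{d(t_0\sin\theta_0)}{d\tilde p_{2z}}=-\frac{m_0 g}{k^3\cos^2\theta_0}\cdot\frac{\frac{l_1}{g_1^3}(k-\sin\theta_0)^2+\frac{l_2}{g_2^3}(k+\sin\theta_0)^2+2l_0\sin^2\theta_0}{\frac{4l_1l_2}{g_1^3 g_2^3}+2l_0\left(\frac{l_1}{g_1^3}+\frac{l_2}{g_2^3}\right)}
\]
with $g_{1,2}=\sqrt{k^2\mp 2k\sin\theta_0+1}>0$. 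The numerator and denominator are sums of manifestly positive terms, so strict negativity holds identically; this sum-of-squares structure is the key algebraic fact your argument never establishes (and your ``geometric inspection'' that $\theta_0$ decreases with $\tilde p_{2z}$ is likewise proved in the paper only via this same implicit differentiation).

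On the bounds: compactness gives a finite maximum and minimum of a continuous derivative, but a strictly positive minimum requires nonvanishing of the derivative on the whole operating set (the cancellation issue again), and Assumption 2 in the main text needs numerical constants $\underline\sigma,\overline\sigma,\overline t_0$ to constrain the gains $k_4$ and $k_f$ --- ``bounds exist by compactness'' does not supply them. The supplement introduces additional hypotheses for this purpose, namely $|\theta_0|\le\theta^*<\pi/2$ and cable-angle bounds $\alpha,\beta\le\kappa<\pi/2$, together with a lower bound $\underline k$ on $k$ (equivalently an upper bound $\overline t_0$ on the internal force) obtained from an AM--GM estimate on the constraint equation, and from these derives closed-form expressions for $\underline\sigma$ and $\overline\sigma$. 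Note also that \eqref{theta assumption} bounds the quadrotor pitch angles $\theta_1,\theta_2$, not the cable angles $\alpha,\beta$ or the pipe pitch $\theta_0$, so it is not the hypothesis that actually delivers the compact configuration set you invoke.
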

\begin{proof}
	Detailed proof can be found in the supplementary material.
\end{proof}
Based on Lemma \ref{lemma1}, equation \eqref{p2z} can be turned into
\begin{equation}
	\begin{aligned}
		\ddot{\tilde{p}}_{2z} = - k_4 \dot{\tilde{p}}_{2z} - k_f t_0 \sigma\left(\tilde{p}_{2z}\right) + \bm{D} \tilde{\bm{d}}
	\end{aligned}
\end{equation}
where $\bm{D}  = \left[ \frac{-2 k_f m_1}{\tan \theta_2 - \tan \theta_1}, 0, \frac{2 k_f m_1 \tan \theta_1}{\tan \theta_2  -\tan \theta_1}, \frac{-2 k_f  m_2}{\tan \theta_2 - \tan \theta_1}, 0,\right.\\ 
\left.  \frac{\left(2 k_f  m_2 - 1\right) \tan \theta_2 + \tan \theta_1}{\tan \theta_2 - \tan \theta_1}\right]$.
According to \eqref{theta assumption}, there exists unknown positive constant $\bar{D}$ satisfying
\begin{equation}
	\|\bm{D}\| \le \bar{D}.
\end{equation}
Here the boundedness of $\|\bm{D}\|$ can be derived since the absolute value of the denominator $|\tan \theta_2 - \tan \theta_1|$ is lower bounded by the positive constant $|\tan \overline{\theta}_2 - \tan \underline{\theta}_1|$.

\begin{assumption}
	There exist positive constants $\varepsilon_1$, $\gamma$, $k_{4}$, and $k_f$ satisfying
	\begin{equation}\label{controlgainassumptipon}
		\begin{aligned}
			k_4 &> \varepsilon_1 + \frac{\overline{t}_0}{4 \gamma}\\
			k_f &< \frac{\varepsilon_1 \underline{\sigma}}{\gamma \overline{\sigma}^2}
		\end{aligned}
	\end{equation} 
	where details about  $\overline{t}_0$, $\overline{\sigma}$ and $\underline{\sigma}$ can be found in the supplementary material.
\end{assumption}
\begin{theorem}
	Consider two quadrotors carrying a suspended payload under the quasi-static condition, modeled as \eqref{control model},
	 with the control laws \eqref{force controller 2},
	the disturbance estimate updated as in \eqref{disturbance estimate},
	and the inconsistency between the thrust uncertainties estimated as in \eqref{inconsistency estimate}.
	Suppose that Assumptions 1-3 hold. 
	Then the signals of the overall closed-loop system, $(\tilde{\bm{p}}_{1xy},
		\dot{\tilde{\bm{p}}}_{1xy}, \tilde{\bm{p}}_{12xy}, \dot{\tilde{\bm{p}}}_{12xy})$,  ($\tilde{p}_{1z}, \dot{\tilde{p}}_{1z}, \tilde{p}_{2z}, \dot{\tilde{p}}_{2z}$), and ($\tilde{\bm{d}}_1, \tilde{\bm{d}}_2$), are uniformly ultimately bounded.
\end{theorem}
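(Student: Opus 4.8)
The plan is to exploit the fact that the closed-loop error dynamics decouple into a horizontal subsystem and two scalar vertical subsystems, all driven by the disturbance estimation error as an exogenous input. First I would record that the boundedness of $(\tilde{\bm{d}}_1, \tilde{\bm{d}}_2)$ is already furnished by \eqref{disturbance bounded} under Assumption~\ref{assumption 1}, independently of the motion errors; hence it remains only to show ultimate boundedness of the tracking and formation errors forced by the bounded signal $\tilde{\bm{d}}$. For the horizontal signals $(\tilde{\bm{p}}_{1xy}, \dot{\tilde{\bm{p}}}_{1xy}, \tilde{\bm{p}}_{12xy}, \dot{\tilde{\bm{p}}}_{12xy})$ I would note that the law \eqref{horizontal controller} is exactly the $xy$-projection of \eqref{formation controller1} and that the $x$- and $y$-channels of \eqref{control model} are mutually decoupled, so the Theorem~1 argument transfers verbatim: stack the horizontal errors into $\bm{\zeta}_{xy}$, obtain $\dot{\bm{\zeta}}_{xy} = \bm{A}\bm{\zeta}_{xy} + \bm{B}_1\tilde{\bm{d}}$ with the same Hurwitz $\bm{A}$, and conclude UUB via $V_{xy} = \bm{\zeta}_{xy}^\top\bm{P}\bm{\zeta}_{xy}$ with $\bm{P}\bm{A} + \bm{A}^\top\bm{P} = -\bm{Q}$; the forcing $\ddot{\bm{p}}_{1d}$ vanishes here since $\dot{\bm{p}}_{1d} = \ddot{\bm{p}}_{1d} = \bm{0}$.

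The vertical channel of quadrotor~1 is immediate: \eqref{exponential stability} is a Hurwitz second-order linear system driven by the bounded $\tilde{d}_{1z}$, so a quadratic Lyapunov function yields UUB of $(\tilde{p}_{1z}, \dot{\tilde{p}}_{1z})$. The substance of the proof, and the step I expect to be hardest, is the vertical error of quadrotor~2, governed by $\ddot{\tilde{p}}_{2z} = -k_4\dot{\tilde{p}}_{2z} - k_f t_0\sigma(\tilde{p}_{2z}) + \bm{D}\tilde{\bm{d}}$, where $\sigma$ is the strictly increasing, sector-bounded restoring nonlinearity of Lemma~\ref{lemma1} and $t_0 > 0$ is bounded by $\overline{t}_0$. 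Because the damping $-k_4\dot{\tilde{p}}_{2z}$ dissipates only velocity, a naive kinetic-plus-potential energy function is not strictly decreasing, so I would take a Lur'e-type candidate with a cross term, $V_{2z} = \tfrac12\dot{\tilde{p}}_{2z}^2 + \int_0^{\tilde{p}_{2z}} k_f t_0\,\sigma(s)\,ds + \gamma\,\tilde{p}_{2z}\dot{\tilde{p}}_{2z}$, with $\gamma$ the free constant from the gain conditions \eqref{controlgainassumptipon}.

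Using the slope bounds $0 < \underline{\sigma} \le \overline{\sigma}$ and $\sigma(0)=0$ from Lemma~\ref{lemma1}, the restoring force $k_f t_0\sigma$ is sector bounded, so the potential obeys quadratic bounds and $V_{2z}$ is positive definite for $\gamma$ small. Differentiating and substituting the dynamics, the kinetic and potential terms combine into $-k_4\dot{\tilde{p}}_{2z}^2$, while the cross term produces the crucial position dissipation $-\gamma k_f t_0\,\tilde{p}_{2z}\sigma(\tilde{p}_{2z}) \le -\gamma k_f t_0\underline{\sigma}\,\tilde{p}_{2z}^2$ together with the indefinite coupling $-\gamma k_4\tilde{p}_{2z}\dot{\tilde{p}}_{2z}$ and the disturbance term $(\dot{\tilde{p}}_{2z}+\gamma\tilde{p}_{2z})\bm{D}\tilde{\bm{d}}$. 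I would then apply Young's inequality to the coupling term and show that the inequalities $k_4 > \varepsilon_1 + \overline{t}_0/(4\gamma)$ and $k_f < \varepsilon_1\underline{\sigma}/(\gamma\overline{\sigma}^2)$ of \eqref{controlgainassumptipon} are precisely what renders the resulting quadratic form in $(\tilde{p}_{2z},\dot{\tilde{p}}_{2z})$ negative definite, where $\|\bm{D}\|\le\bar{D}$ is bounded by the sector \eqref{theta assumption}. Completing the square against the bounded $\bm{D}\tilde{\bm{d}}$ then makes $\dot{V}_{2z}$ negative outside a compact set, giving UUB of $(\tilde{p}_{2z},\dot{\tilde{p}}_{2z})$.

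Finally, since the horizontal subsystem, the two vertical subsystems, and the estimation error are each UUB and are coupled only through the already-bounded $\tilde{\bm{d}}$, I would assemble them into uniform ultimate boundedness of the full state tuple. The delicate point I would verify is that the quasi-static reduction \eqref{balance} and Lemma~\ref{lemma1} remain valid along trajectories, i.e. that the pitch angles stay in the sector \eqref{theta assumption} so that $\sigma$ keeps its slope bounds and $\bm{D}$ its norm bound; this is where the geometric constraint $\|\bm{p}_{12dxy}\| > 2l_0$ and the positivity of the internal force $t_0$ are indispensable.
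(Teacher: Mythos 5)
Your decomposition is exactly the paper's: $(\tilde{\bm{d}}_1,\tilde{\bm{d}}_2)$ bounded by \eqref{disturbance bounded} independently of the motion errors, the horizontal block inherited from the Theorem~1 argument because \eqref{horizontal controller} is the $xy$-projection of the position-coordination law, the quadrotor-1 vertical channel settled by \eqref{exponential stability}, and all the real work concentrated in the quadrotor-2 vertical error. Where you genuinely differ is the Lyapunov construction for that last subsystem. The paper uses the constant-coefficient quadratic form \eqref{lyapunov} with cross term $\varepsilon_1\tilde{p}_{2z}\dot{\tilde{p}}_{2z}$, so the term $-k_f t_0\dot{\tilde{p}}_{2z}\sigma(\tilde{p}_{2z})$ survives in $\dot{V}$ and is split by Young's inequality with parameter $\gamma$; that split is exactly the origin of the two conditions in \eqref{controlgainassumptipon} ($k_f$ small enough that $\varepsilon_1\underline{\sigma}-\gamma k_f\overline{\sigma}^2>0$, and $k_4$ large enough that $k_4-\varepsilon_1-t_0/(4\gamma)>0$). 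Your Lur'e--Postnikov function instead cancels the $\sigma(\tilde{p}_{2z})\dot{\tilde{p}}_{2z}$ term identically, leaving only the coupling $-\gamma k_4\tilde{p}_{2z}\dot{\tilde{p}}_{2z}$ to be handled by Young's inequality. That is a legitimate, arguably cleaner route.

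Two points need repair before your version closes. First, the conditions \eqref{controlgainassumptipon} are \emph{not} ``precisely what renders'' your quadratic form negative definite: with your $V_{2z}$ the requirement that emerges is $(k_4-\gamma)\,k_f t_0\underline{\sigma} > \gamma k_4^2/4$, i.e., effectively a \emph{lower} bound on $k_f t_0$, whereas \eqref{controlgainassumptipon} only imposes an \emph{upper} bound on $k_f$. Under the paper's assumptions (say $k_f$ very small with $\gamma$ comparable to $k_4$) your form can be indefinite, so the step as stated fails. The fix is easy but must be made explicit: decouple your cross-term coefficient from the $\gamma$ of Assumption~2 and choose it sufficiently small on its own, after which negative definiteness (and positive definiteness of $V_{2z}$) hold; but then your sufficient conditions differ from the paper's, and you should say so rather than claim they coincide. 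Second, $t_0$ is not a constant along trajectories --- it varies with $\theta_0$, hence with $\tilde{p}_{2z}$ --- so writing $\int_0^{\tilde{p}_{2z}}k_f t_0\,\sigma(s)\,ds$ makes the coefficient of your potential term state-dependent, and $\dot{V}_{2z}$ acquires a term in $\dot{t}_0$ that you never account for. The clean repair is to treat the whole composite quantity of Lemma~\ref{lemma1}, $t_0\sin\theta_0=-\sigma(\tilde{p}_{2z})$, as the purely state-dependent restoring nonlinearity and integrate that. The paper's constant-coefficient quadratic form sidesteps both issues, needing only the pointwise bounds $0<t_0\le\overline{t}_0$; that robustness is what its choice buys, at the price of the extra gain condition on $k_f$.
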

\begin{proof}
	The boundedness of $\tilde{\bm{d}}_i$ is already established in \eqref{disturbance bounded}.
	The horizontal controller \eqref{horizontal controller} and the vertical controller \eqref{tracking controller1} \eqref{vertical controller} are designed independently.
	Since the horizontal controller \eqref{horizontal controller} is the same as that in the position-coordination control scheme, the boundedness of the states in the horizontal plane $(\tilde{\bm{p}}_{1xy},
	\dot{\tilde{\bm{p}}}_{1xy}, \tilde{\bm{p}}_{12xy}, \dot{\tilde{\bm{p}}}_{12xy})$ can be  proved. 
	In addition, the boundedness of $(\tilde{p}_{1z}, \dot{\tilde{p}}_{1z})$ is already established after \eqref{exponential stability}.
	
	We focus on the stability of quadrotor 2 in the vertical direction.
Define the Lyapunov function as
\begin{equation}
	\label{lyapunov}
	V = \frac{1}{2} 
	\begin{bmatrix}
		\tilde{p}_{2z} & \dot{\tilde{p}}_{2z}
	\end{bmatrix}
\begin{bmatrix}
	k_4 \varepsilon_1 & \varepsilon_1\\
	\varepsilon_1 & 1
\end{bmatrix}
\begin{bmatrix}
	\tilde{p}_{2z}\\
	\dot{\tilde{p}}_{2z}
\end{bmatrix}
\end{equation} 
where $\varepsilon_1 \in \mathbb{R}^{+}$ is referred in Assumption 2.

Differentiate \eqref{lyapunov} with respect to time $t$
\begin{equation}
\begin{aligned}
	\dot{V} =& 
	\begin{bmatrix}
		\tilde{p}_{2z} & \dot{\tilde{p}}_{2z}
	\end{bmatrix}
	\begin{bmatrix}
		k_4 \varepsilon_1 & \varepsilon_1\\
		\varepsilon_1 & 1
	\end{bmatrix}
	\begin{bmatrix}
		\dot{\tilde{p}}_{2z}\\
		\ddot{\tilde{p}}_{2z}
	\end{bmatrix}\\
	=& k_4\varepsilon_1  \tilde{p}_{2z} \dot{\tilde{p}}_{2z}+ \varepsilon_1 \tilde{p}_{2z} \ddot{\tilde{p}}_{2z}
	+ \varepsilon_1 \dot{\tilde{p}}_{2z}^2 + \dot{\tilde{p}}_{2z} \ddot{\tilde{p}}_{2z} \\
	=& - k_f t_0 \varepsilon_1 \tilde{p}_{2z} \sigma\left(\tilde{p}_{2z}\right)
	-\left(k_4 - \varepsilon_1\right) \dot{\tilde{p}}_{2z}^2\\
	&- k_f t_0 \dot{\tilde{p}}_{2z} \sigma \left(\tilde{p}_{2z}\right)+ \dot{\tilde{p}}_{2z} \bm{D} \tilde{\bm{d}} + \varepsilon_1 \tilde{p}_{2z} \bm{D} \tilde{\bm{d}}\\
	\le & -k_f  t_0  \varepsilon_1  \underline{\sigma} \tilde{p}_{2z}^2
	- \left(k_4 - \varepsilon_1\right) \dot{\tilde{p}}_{2z}^2 \\
	&- k_f t_0 \dot{\tilde{p}}_{2z} \sigma \left(\tilde{p}_{2z}\right)
	+ \dot{\tilde{p}}_{2z} \bm{D} \tilde{\bm{d}} + \varepsilon_1 \tilde{p}_{2z} \bm{D} \tilde{\bm{d}}.
\end{aligned}
\end{equation}
Using Young's inequality yields
\begin{equation}
	\begin{aligned}
		- k_f t_0 \dot{\tilde{p}}_{2z} \sigma \left(\tilde{p}_{2z}\right) \le& \frac{{t}_0 }{4 \gamma} \dot{\tilde{p}}_{2z}^2 + \gamma k_f^2 t_0  \sigma^2 \left(\tilde{p}_{2z}\right) \\
		\le&     \frac{{t}_0 }{4 \gamma} \dot{\tilde{p}}_{2z}^2 + \gamma k_f^2 t_0  \overline{\sigma}^2 \tilde{p}_{2z}^2 
	\end{aligned}
\end{equation}
where $\gamma \in \mathbb{R}^{+}$ is the tuning parameter.

Therefore, $\dot{V}$ satisfies 
\begin{equation}
	\begin{aligned}
		\dot{V} \le& - \left(k_f t_0 \varepsilon_1 \underline{\sigma} - \gamma k_f^2 t_0  \overline{\sigma}^2\right) \tilde{p}_{2z}^2 - \left( k_4 - \varepsilon_1  
		- \frac{t_0}{4 \gamma}\right) \dot{\tilde{p}}_{2z}^2\\
		&+ \|\dot{\tilde{p}}_{2z}\| \| \bm{D}\| \| \tilde{\bm{d}}\| + \varepsilon_1 \|\tilde{p}_{2z}\|  \|\bm{D}\|  \|\tilde{\bm{d}}\|\\
		\le&  - k_f t_0 \left( \varepsilon_1 \underline{\sigma} - \gamma k_f   \overline{\sigma}^2\right) \tilde{p}_{2z}^2 - \left( k_4 - \varepsilon_1  
		- \frac{t_0}{4 \gamma}\right) \dot{\tilde{p}}_{2z}^2\\
		&+ \sqrt{2}  \bar{D} \bar{\tilde{d}} \|\dot{\tilde{p}}_{2z}\|   + \sqrt{2}\varepsilon_1  \bar{D} \bar{\tilde{d}} \|\tilde{p}_{2z}\| 
	\end{aligned}
\end{equation}
where $\| \tilde{\bm{d}} \| \le \sqrt{2} \bar{\tilde{d}}$ is used.
 According to Lyapunov bounded theory \cite{khalil2002nonlinear}, 
the error variables $\left(\tilde{p}_{2z}, \dot{\tilde{p}}_{2z}\right)$ are uniformly ultimately bounded if \eqref{controlgainassumptipon} is satisfied.
\end{proof}
\section{Numerical Simulations} \label{section 5}
Numerical simulation is carried out first to validate the proposed force-coordination control strategy. Specially,  cables of different unknown lengths and different thrust uncertainties are configured for the two quadrotors to demonstrate the effects of disturbance separation and force-consensus.
In simulation, the cables are assumed to be massless links and the cable forces are modeled following \cite{Taeyoung2018}.
System and control parameters are listed in Table \ref{tab}.
The thrust uncertainties for the two quadrotors are set as $\Delta f_1 = - 0.2 f_{1c}$ and $\Delta f_2 = -0.4 f_{2c}$, respectively. 
The simulation time is set as 30 seconds.
Quadrotor 2 is free to adjust its height to achieve force-consensus.  According to the equilibrium analysis in Section \ref{Equilibrium Analysis}, the pipe will eventually become parallel to the ground. 

\begin{table}[h]
	\centering
	\caption{Parameters used in the simulation}
	\resizebox{!}{!}{
\begin{tabular}{c c c} 
			\hline 
			\hline
			\multicolumn{3}{c}{System Parameters}\\
			\hline
			\multirow{3}*{$m_i$} & i = 0 & $0.44,  \mathrm{kg}$\\
			 & i = 1 & $0.87, \mathrm{kg}$\\
			& i = 2 & $0.88, \mathrm{kg}$\\
			\hline
			\multirow{3}*{$\bm{J}_i $} & i = 0 & $\mathrm{diag}\{ 0.0035, 0.15, 0.15\}, \mathrm{kg} \cdot \mathrm{m}^2$\\
			& i = 1 & $\mathrm{diag}\{ 0.003, 0.003, 0.004\}, \mathrm{kg} \cdot \mathrm{m}^2$\\
			& i = 2 & $\mathrm{diag}\{0.003, 0.003, 0.004\}, \mathrm{kg} \cdot \mathrm{m}^2$\\
			\hline
			\multirow{3}*{$l_i $} & i = 0 & $1.0, \mathrm{m}$\\
			& i = 1 & $0.8, \mathrm{m}$\\
			& i = 2 & $0.4, \mathrm{m}$\\
			\hline
			\multicolumn{3}{c}{Control Parameters}\\
			\hline
			\multicolumn{2}{c}{$k_1, k_2, k_3, k_4$} & $4, 4, 5, 8$\\
			\multicolumn{2}{c}{$k_f$} & $0.5$\\
			\multicolumn{2}{c}{$\iota$} & $5$\\
			\hline
			\multicolumn{3}{c}{Initial Conditions}\\
			\hline
			\multicolumn{2}{c}{$\bm{p}_{0}(0)$} & $[0, 0, 0]^\top, \mathrm{m}$\\
			\multicolumn{2}{c}{$\bm{p}_{1}(0)$} & $[1.4, 0.12, -0.68]^\top, \mathrm{m}$\\
			\multicolumn{2}{c}{$\bm{p}_2 (0)$} & $[-1.14, 0, -0.38]^\top, \mathrm{m}$\\
			\multicolumn{2}{c}{$\dot{\bm{p}}_i (0)$, $i = 0, 1, 2$} & $[0, 0, 0]^\top, \mathrm{m/s}$\\
			\multicolumn{2}{c}{$\bm{R}_i (0)$, $i = 0, 1, 2$} & $[\bm{e}_1, \bm{e}_2, \bm{e}_3]$\\
			\multicolumn{2}{c}{$\bm{\Omega}_i(0)$, $i = 0, 1, 2$} & $[0, 0, 0]^\top, \mathrm{rad/s}$\\
			\multicolumn{2}{c}{$\bm{z}_i(0)$, $i = 1, 2$} & $[0, 0, 0]^\top $\\
			\hline
			\multicolumn{3}{c}{Desired Trajectories}\\
			\hline
			\multicolumn{2}{c}{$\bm{p}_{1d}$} & $[1, 0, -1]^\top, \mathrm{m}$\\
			\multicolumn{2}{c}{$\bm{p}_{12dxy}$} & $[2.5, 0]^\top, \mathrm{m}$\\
			\hline
			\hline
\end{tabular}}
\label{tab}
\end{table}

To test the stability of the equilibrium,  the aerial transportation system works in the position-coordination control mode in the first 10 seconds, so the pipe leans towards approximately $-10^\circ$ in the end as shown in Fig. \ref{fig_14}. Then  the controllers switch to the force-coordination mode.
Since quadrotor 2 has the shorter cable, it burdens more pipe weight than quadrotor 1 under the position-consensus condition. Once in the force-coordination mode, quadrotor 2 comes down slowly, making the pipe finally parallel to the ground as shown in Fig. \ref{fig_12}, which implies the stability of the equilibrium.

\begin{figure}[!hbt]
	\centering
	\includegraphics[width=3.5in]{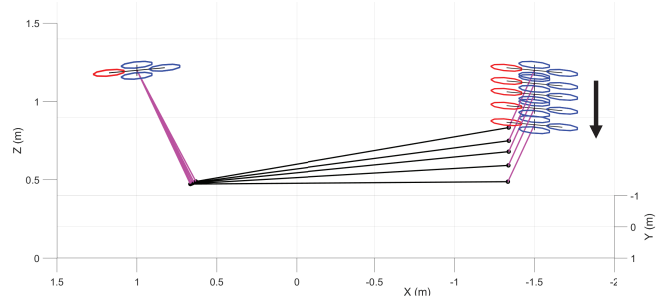}
	\caption{Movement snapshots of force-consensus simulation.}
	\label{fig_12}
\end{figure}

\begin{figure}[!hbt]
	\centering
	\includegraphics[width=3.5in]{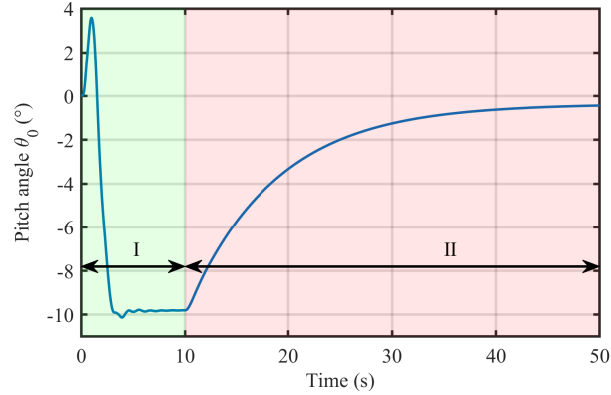}
	\caption{Pitch angle $\theta_0$ of the pipe. Stage \Rmnum{1} stands for the position-coordination period and  Stage \Rmnum{2} stands for the force-coordination period.}
	\label{fig_14}
\end{figure}

The estimation errors of vertical cable forces are presented in Fig. \ref{fig_13}.
The bounded convergence property is exhibited. 
Subject to the disturbance estimation error $\tilde{\bm{d}}_i$, both the cable force estimation error in Fig. \ref{fig_13} and the pitch angle in  Fig. \ref{fig_14} can only converge to a small neighborhood of zero.

\begin{figure}[!hbt]
	\centering
	\includegraphics[width=3.5in]{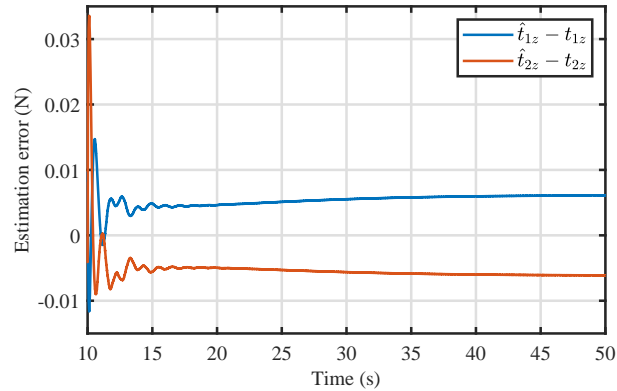}
	\caption{Estimation errors of the vertical cable forces.}
	\label{fig_13}
\end{figure}

\begin{figure*}[htbp]
	\centering
	\includegraphics[width=6in]{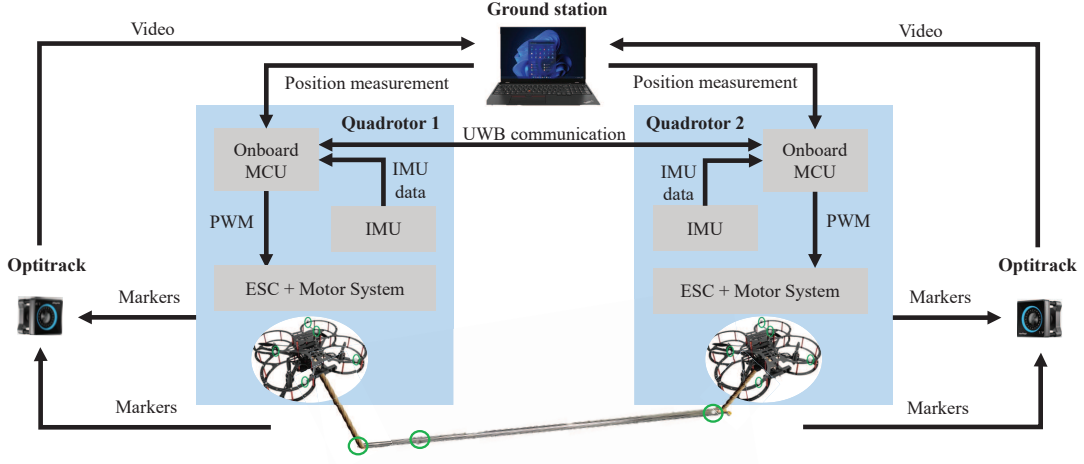}
	\caption{Experimental platform structure.}
	\label{fig_15}
\end{figure*}

\section{Experiment Validation}\label{section 6}
To demonstrate the effectiveness of the proposed force-coordination algorithm in practical implementation, real-world flight tests are performed in an indoor flight test environment. The test facility consists of Optitrack motion capture system, ground station, and quadrotor platforms, as shown in Fig. \ref{fig_15}, which has been used to support different research projects (see e.g.,  \cite{jindoujia, cailiu, zhang2022safety} for details).
The first test is intended to illustrate the significant inconsistency of the thrust uncertainties between the two quadrotors, 
followed by the main test for the aerial transportation system.

System and control parameters are listed in Table \ref{tab2}.
Due to the measurement noises,
the disturbance observer  gain $\iota$ cannot be selected too large.
Otherwise, it will result in large chattering in the disturbance estimate,
which may lead to sudden and large swings of the quadrotor.
The force-consensus gain $k_f$ is selected small as well.

\begin{table}[h]
	\centering
	\caption{Parameters used in the experiments}
	\resizebox{!}{!}{
		\begin{tabular}{c c c} 
			\hline 
			\hline
			\multicolumn{3}{c}{System Parameters}\\
			\hline
			\multirow{3}*{$m_i$} & i = 0 & $0.745, \mathrm{kg}$\\
			& i = 1 & $0.831, \mathrm{kg}$\\
			& i = 2 & $0.832, \mathrm{kg}$\\
			\hline
			\multirow{2}*{$\bm{J}_i $} & i = 1 & $\mathrm{diag}\{ 0.003, 0.003, 0.004\}, \mathrm{kg} \cdot \mathrm{m}^2$\\
			& i = 2 & $\mathrm{diag}\{0.003, 0.003, 0.004\}, \mathrm{kg} \cdot \mathrm{m}^2$\\
			\hline
			\multirow{3}*{$l_i $} & i = 0 & $1.0, \mathrm{m}$\\
			& i = 1 & $0.7, \mathrm{m}$\\
			& i = 2 & $0.3, \mathrm{m}$\\
			\hline
			\multicolumn{3}{c}{Control Parameters}\\
			\hline
			\multicolumn{2}{c}{$k_1, k_2, k_3, k_4$} & $4, 4, 5, 8$\\
			\multicolumn{2}{c}{$k_f$} & $0.1$\\
			\multicolumn{2}{c}{$\iota$} & $1.2$\\
			\hline
			\multicolumn{3}{c}{Desired Trajectories}\\
			\hline
			\multicolumn{2}{c}{$\bm{p}_{1d}$} & $[1.5, 0, -1.2]^\top, \mathrm{m}$\\
			\multicolumn{2}{c}{$\bm{p}_{12dxy}$} & $[2.5, 0]^\top, \mathrm{m}$\\
			\hline
			\hline
	\end{tabular}}
	\label{tab2}
\end{table}

\subsection{Hovering experiment}
Two quadrotors carrying objects of similar weight are required to hover at the same height.
Specifically, 
quadrotor 1 carries a payload of $0.248 \mathrm{kg}$ 
and  quadrotor 2 carries a payload  of $0.252 \mathrm{kg}$.
We adopt the disturbance observer based position controller for both quadrotors.
The desired height for the two quadrotors are chosen as $1.2 \mathrm{m}$.

According to the definition \eqref{control input def} and the model \eqref{control model}, under the hovering condition of $\ddot{\bm{p}}_i = \bm{0}_{3\times 1}$, the actual values of the lumped disturbances in the vertical direction for the two quadrotors  can be approximately computed as $m_i d_{iz} =  - m_i g + f_{ic}$. 
It can be observed from  Fig. \ref{fig_16} that the disturbance estimate $m_i \hat{d}_{iz}$ converges to a small neighborhood of the actual value $m_i d_{iz}$,
 implying the effectiveness of the disturbance observer.
Although the whole weight of the two quadrotor-payload units are almost same, 
$m_2 d_{2z}$ is nearly $1 \mathrm{N}$ larger than $m_1 d_{1z}$.
It can inferred that the inconsistency of the thrust uncertainties between the two quadrotors reaches about $10\%$ of the total gravity and $40\%$ of the payload gravity  (whole weight of quadrotor 1 with payload is 1.079kg and payload is 0.248kg), which implies that even drones of the same type may have severe inconsistency of uncertainties.
According to the experimental experience, this inconsistency may be amplified by the differences between battery levels.
In this case, it is unfeasible to apply the force control methods proposed in  \cite{thapa2020cooperative} or \cite{Tognon}.
Therefore, we have to separate the cable force from the lumped disturbance for force-coordination control.

\begin{figure}[!hbt]
	\centering
	\includegraphics[width=3.5in]{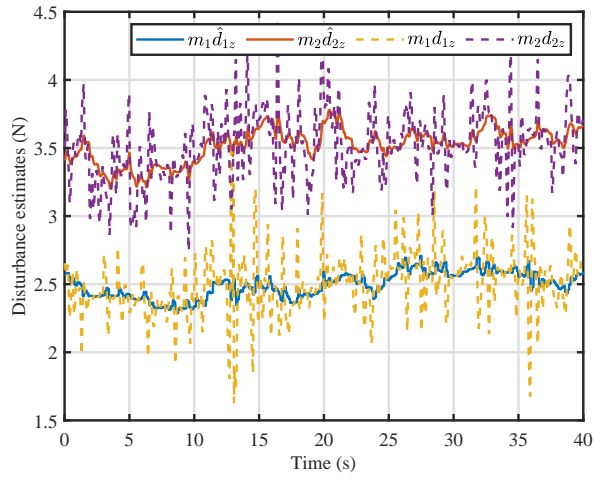}
	\caption{Disturbance estimates in the vertical direction.}
	\label{fig_16}
\end{figure}

\subsection{Force-consensus-based experiment}
\begin{figure}[!hbt]
	\centering
	\includegraphics[width=3.5in]{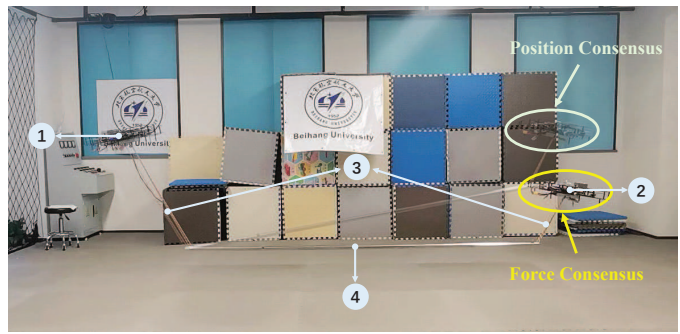}
	\caption{1) quadrotor 1 acting as the leader; 2) quadrotor 2 acting as the follower; 3) cables of different lengths; 4) the steel pipe}
	\label{fig_1}
\end{figure}	

The experiment configuration is shown in Fig. \ref{fig_15}.
Quadrotor 1 is required to hover at the desired position $\bm{p}_{1d}$,
while quadrotor 2 is commanded to maintain the desired horizontal relative position $\bm{p}_{12dxy}$
and achieve force-consensus in the vertical direction.
The choice of $\bm{p}_{12d}$ is based on the requirement that
$\|\bm{p}_{12dxy}\|$ has to be larger than the length of the pipe as explained in \eqref{formation setting} and less than the sum of the lengths of the cables and the length of the pipe, i.e., $2 l_0 < \| \bm{p}_{12dxy}\| < 2l_0 + l_1 + l_2$. 

Same as  in the simulation studies,
the experiment also starts from position-coordination control mode and then changes to force-coordination control mode at $10$s.
The position tracking errors of quadrotor 1 and quadrotor 2 are shown respectively in Fig. \ref{fig_17} and Fig. \ref{fig_18}.
It can be observed that
the position tracking errors fluctuate in the  range of $0.02 \mathrm{m}$ roughly.
Therefore, combining with the estimation results shown in Fig. \ref{fig_16}, the robustness of the DO-based controller to the lumped disturbance is well demonstrated.
\begin{figure}[!hbt]
	\centering
	\includegraphics[width=3.5in]{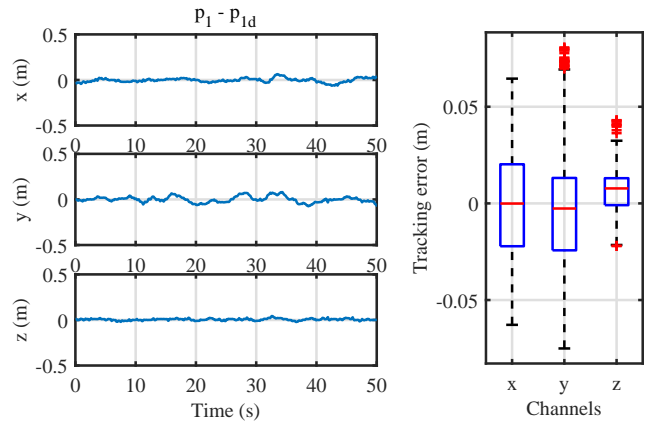}
	\caption{Tracking performance of quadrotor 1.}
	\label{fig_17}
\end{figure}

\begin{figure}[!hbt]
	\centering
	\includegraphics[width=3.5in]{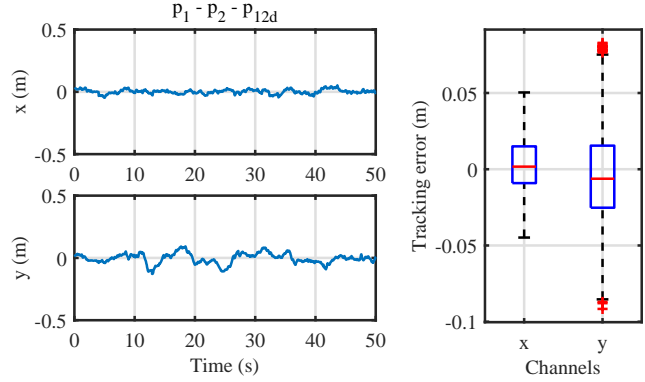}
	\caption{Formation-keeping performance of quadrotor 2.}
	\label{fig_18}
\end{figure}

The estimates of thrust uncertainties $\Delta f_1$ and $\Delta f_2$ are shown in Fig. \ref{fig_19}.
	It can be observed that the thrust uncertainty estimate $\Delta \hat{f}_2$ for quadrotor 2 is larger than $\Delta \hat{f}_1$ for quadrotor 1,
	which matches with the comparison result in the hovering experiment.
The evolution of the pitch angle  $\theta_0$ of the pipe is shown in Fig. \ref{fig_20}.
During the first 10 seconds,
the pipe keeps static at the inclined posture of $\theta_0 \approx -14^\circ$.
In the subsequent force-coordination control mode, the pipe  approaches the equilibrium of the horizontal  posture gradually, as quadrotor 2 goes down slowly over 20 seconds.
Actually, even though the dominant thrust uncertainty has been removed from the lumped force disturbance estimate, there are still some extra trivial uncertainties existing in the residual disturbance estimate, which is regarded as the vertical cable force estimate $\hat{t}_{iz}$. 
As a result, the final pitch angle $\theta_0$ of the pipe can only stay in the interval of $1^\circ$ and $3^\circ$. 
Here the extra model uncertainties refer to the mass of the cable, the deviation of the CoM of the quadrotor, the acceleration of the pipe, and the turbulence in the test area,
which are usually small compared to the thrust uncertainty, so that they are neglected in this study.

\begin{figure}[!hbt]
	\centering
	\includegraphics[width=3.5in]{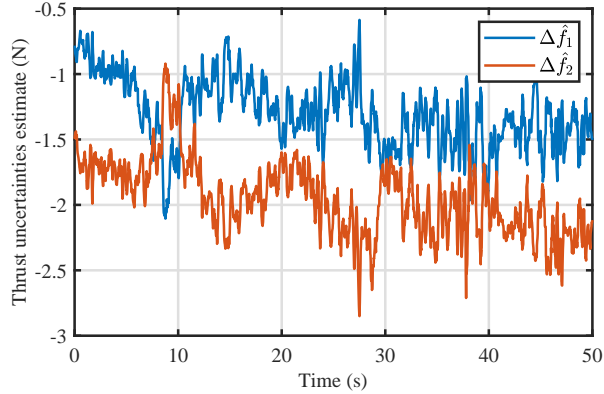}
	\caption{Estimates of the thrust uncertainties.}
	\label{fig_19}
\end{figure}

\begin{figure}[!hbt]
	\centering
	\includegraphics[width=3.5in]{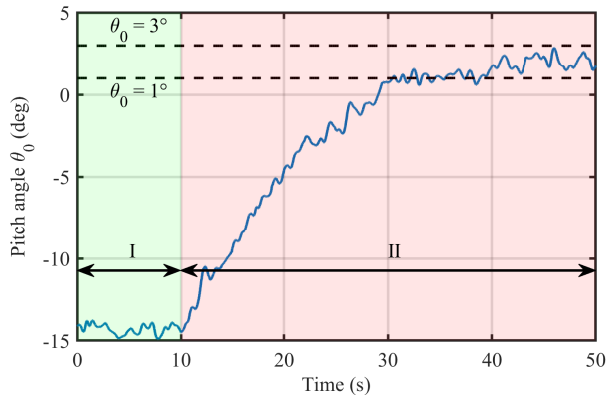}
	\caption{Pitch angle $\theta_0$ of the pipe.
		 Stage \Rmnum{1} stands for the position-coordination period and  Stage \Rmnum{2} stands for the force-coordination period.}
	\label{fig_20}
\end{figure}

\section{Conclusions}\label{section 7}
In this article, a force-coordination control scheme with disturbance separation and estimation is proposed, as demonstrated for a collaborative transportation system. Compared to position-coordination control, force-coordination control can provide more complex manipulation of the payload than simply moving along the predefined trajectory. Under the quasi-static condition, the force-consensus objective can ensure that vehicles share the same weight of the payload, which can extend the endurance of the entire transportation mission. By exploiting the intrinsic force balance conditions of the cooperative quadrotors, thrust uncertainty can be separately estimated from the lumped force disturbance. Therefore, a more accurate cable force estimate can be obtained by removing thrust uncertainty. This overcomes the problem that the existing disturbance estimation methods cannot distinguish the different disturbances in the same channel. Simulation results verify the effectiveness of the proposed method and experiments further demonstrate that the proposed method can achieve good performance in practical implementation of payload transportation using heterogeneous quadrotors. Future research directions include payload attitude control through force-coordination and separating other undesirable force disturbances, such as wind.


\pagebreak

\begin{center}
	\textbf{\large Supplemental Material: 
		Sector Bounds for Vertical Cable Force Error in Cable-Suspended Load Transportation System}
\end{center}

	\begin{figure}[h]
	\centering
	\includegraphics[width=5in]{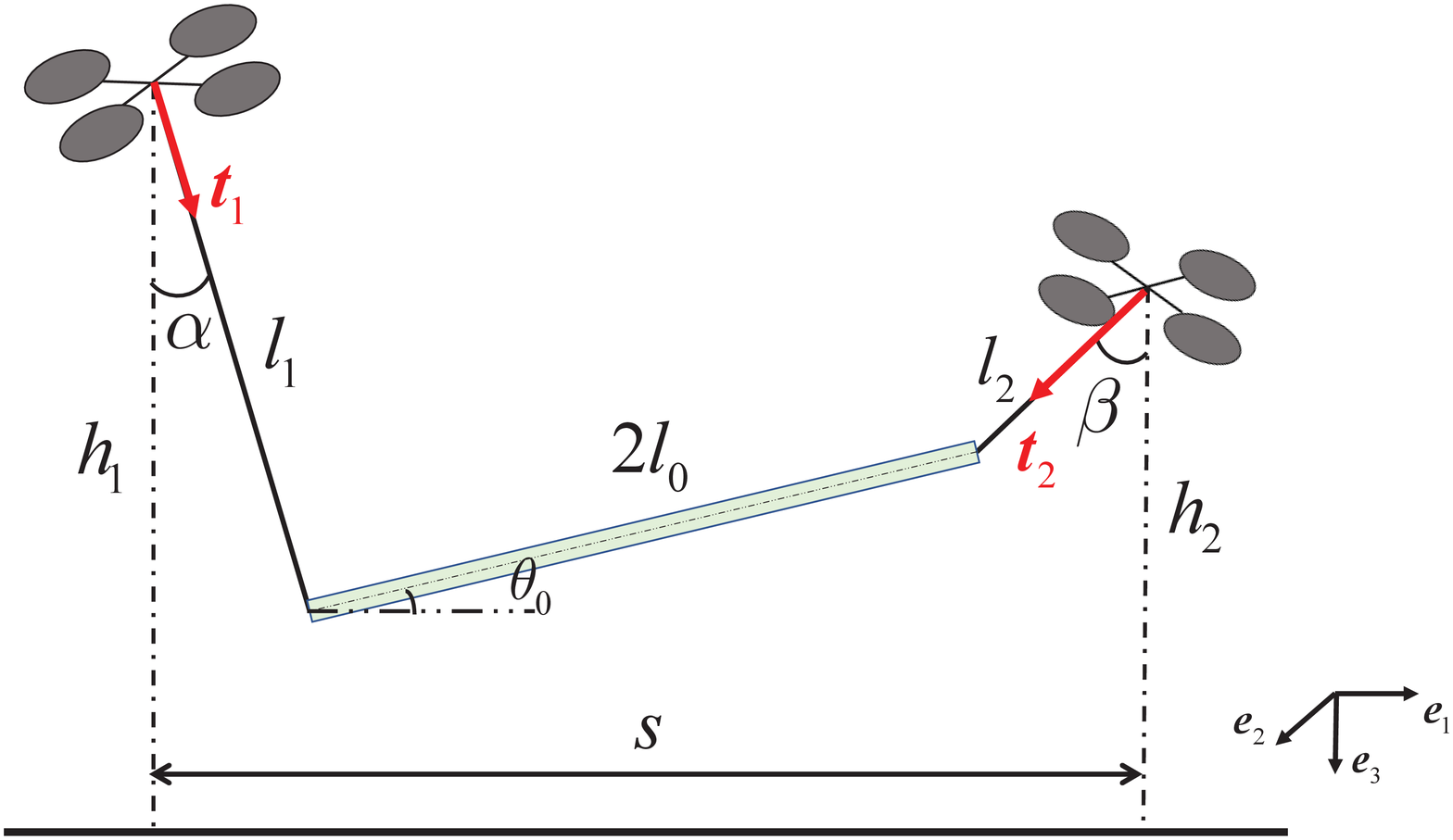}
	\caption{Regulation of the height of quadrotor 2.}
	\label{fig_22}
\end{figure}

\subsection{Proof for Lemma 1}

\begin{proof}
	First, considering the scene where the internal force $t_0 > 0$, the cable angles $\alpha, \beta$ and the load angle $\theta_0$ in Fig. \ref{fig_22} satisfy
	\begin{equation}
		\begin{aligned}
			0 < \alpha < \frac{\pi}{2},
			\quad 0 < \beta < \frac{\pi}{2},
			\quad -\frac{\pi}{2} < \theta_0 < \frac{\pi}{2}.
		\end{aligned}
	\end{equation}
	As the payload stays in the XZ plane of  NED frame under the quasi-static condition, 
	the cable forces can be computed as \cite{Tognon}
	\begin{equation}\label{balance force}
		\begin{aligned}
			\bm{t}_1 &= 
			\left[	\frac{t_0 \cos \theta_0 }{2}, 0, \frac{-t_0 \sin \theta_0 + m_0 g}{2} 
			\right]^\top, \quad
			\bm{t}_2 =
			\left[	-\frac{ t_0 \cos \theta_0}{2}, 0, \frac{t_0 \sin \theta_0 + m_0 g}{2}
			\right]^\top
		\end{aligned}
	\end{equation}
	where $\theta_0$ is the pitch angle of the pipe notated in Fig. \ref{fig_22}.
	
	From \eqref{balance force}, trigonometric functions of $\alpha$ and $\beta$  are computed as
	\begin{equation}\label{substitutions}
		\begin{aligned}
			\cos \alpha = &\frac{-t_0 \sin \theta_0 + m_0 g}{\sqrt{\left(t_0 \cos \theta_0\right)^2 + \left(-t_0 \sin \theta_0 + m_0 g\right)^2}} &&= \frac{k - \sin \theta_0}{g_1}\\
			\cos \beta = &\frac{t_0 \sin \theta_0 + m_0 g}{\sqrt{\left(t_0 \cos \theta_0\right)^2 + \left(t_0 \sin \theta_0 + m_0 g\right)^2}} &&= \frac{k + \sin \theta_0}{g_2}\\
			\sin \alpha = &\frac{t_0 \cos \theta_0}{\sqrt{\left(t_0 \cos \theta_0\right)^2 + \left(-t_0 \sin \theta_0 + m_0 g\right)^2}} &&= \frac{\cos \theta_0}{g_1}\\
			\sin \beta = &\frac{t_0 \cos \theta_0}{\sqrt{\left(t_0 \cos \theta_0\right)^2 + \left(t_0 \sin \theta_0 + m_0 g\right)^2}} &&= \frac{\cos \theta_0}{g_2}
		\end{aligned}
	\end{equation}
	where $k = \frac{m_0 g}{t_0}$, $g_1 = \sqrt{k^2 - 2 k \sin \theta_0 + 1}$, and 
	$g_2 = \sqrt{k^2 + 2 k \sin \theta_0 + 1}$ are used for substitutions.
	Here the derivatives of $g_1$ and $g_2$ with respect to $\theta_0$ are computed as
	\begin{equation}
		\begin{aligned}
			\frac{d g_1}{d \theta_0} &= \frac{1}{g_1} \left[(k - \sin \theta_0) \frac{d k}{d \theta_0} - k \cos \theta_0\right], \quad
			\frac{d g_2}{d \theta_0} = \frac{1}{g_2} \left[(k + \sin \theta_0) \frac{d k}{d \theta_0} + k \cos \theta_0\right].
		\end{aligned}
	\end{equation}
	
	The internal force $t_0$ satisfies the following constraint equation
	\begin{equation}
		\label{constraint}
		\begin{aligned}
			l_1 \sin \alpha + l_2 \sin \beta + 2 l_0 \cos \theta_0 = s.
		\end{aligned}
	\end{equation}
	Using the substitutions in \eqref{substitutions} yields
	\begin{equation}\label{constraint2}
		\begin{aligned}
			l_1 \frac{\cos \theta_0}{g_1} + l_2 \frac{\cos \theta_0}{g_2} + 2 l_0 \cos \theta_0 = s.
		\end{aligned}
	\end{equation}
	The constraint equation \eqref{constraint2} is a high-order equation with respect to $k$,
	so
	it is quite hard to express the variable $k$ as an analytic function of $\theta_0$.
	Differentiating \eqref{constraint2} with respect to $\theta_0$  yields
	\begin{equation}\label{deri k}
		\begin{aligned}
			\frac{l_1}{g_1^3} [(k - \sin \theta_0) \frac{d k}{d \theta_0} - k \cos \theta_0] 
			+ \frac{l_2}{g_2^3} [(k + \sin \theta_0) \frac{d k}{d \theta_0} + k \cos \theta_0]
			= -\frac{s \cdot \sin \theta_0}{\cos^2 \theta_0}.
		\end{aligned}
	\end{equation}
	Combining \eqref{constraint2}, the derivative of $k$ with respect to $\theta_0$ is computed as
	\begin{equation}
		\begin{aligned}
			\frac{d k}{d \theta_0} = - \frac{\frac{l_1}{g_1^3}(k - \sin \theta_0) (k \sin \theta_0 - 1)  + \frac{l_2}{g_2^3} (k + \sin \theta_0)(k \sin \theta_0 + 1) + 2 l_0 \sin \theta_0}{\left[\frac{l_1 (k - \sin \theta_0)}{g_1^3} + \frac{l_2 (k + \sin \theta_0)}{g_2^3}\right] \cos \theta_0}
		\end{aligned}
	\end{equation}
	
	Next, the height of quadrotor 2 is calculated as
	\begin{equation}
		\begin{aligned}
			h_2 &= l_2 \cos \beta  + 2 l_0 \sin \theta_0 - l_1 \cos \alpha + h_1\\
			&= l_2 \frac{k+\sin \theta_0}{g_2} + 2 l_0 \sin \theta_0 - l_1 \frac{k-\sin \theta_0}{g_1} + h_1
		\end{aligned}
	\end{equation}
	where  $h_1 = -p_{1z}$ is assumed to be static at the steady state.
	The lengths of cables $l_1$, $l_2$ and the length of pipe $2 l_0$ are also fixed.
	Then $h_2$ can be seen as a continuous function of  $\theta_0$.  
	Differentiating $h_2$ with respect to $\theta_0$ yields
	\begin{equation}
		\begin{aligned}
			\frac{d h_2}{d \theta_0} &= l_2 \frac{(\frac{d k}{d \theta_0} + \cos \theta_0) g_2 - (k + \sin \theta_0) \frac{d g_2}{d \theta_0}}{g_2^2}
			- l_1 \frac{(\frac{d k}{d \theta_0} - \cos \theta_0) g_1 - (k - \sin \theta_0) \frac{d g_1}{d \theta_0}}{g_1^2} + 2 l_0 \cos \theta_0\\
			&= (\frac{l_2}{g_2^3} - \frac{l_1}{g_1^3}) \frac{d k}{d \theta_0} \cos^2 \theta_0 + \frac{l_2 \cos \theta_0 (k \sin \theta_0 + 1)}{g_2^3} + \frac{l_1 \cos \theta_0 (1 - k \sin \theta_0)}{g_1^3} + 2 l_0 \cos \theta_0\\
			&= k \cos \theta_0 \frac{\frac{4 l_1 l_2}{g_1^3 g_2^3} + 2 l_0 (\frac{l_2}{g_2^3}+\frac{l_1}{g_1^3})}{\frac{l_1(k - \sin \theta_0)}{g_1^3} + \frac{l_2 (k + \sin \theta_0)}{g_2^3}}.
		\end{aligned}
	\end{equation}
	Since $p_{1dz}$ and $p_{12 d}$ are fixed, 
	$p_{2dz}$ is also fixed according to the equilibrium analysis in the article, i.e., $\dot{p}_{2dz} = 0$.
	Therefore, the following equation is obtained 
	\begin{equation}
		\begin{aligned}
			\frac{d \tilde{p}_{2z}}{d \theta_0} = \frac{d \left(-h_2 - p_{2dz}\right)}{d \theta_0} = - \frac{d h_2}{d \theta_0} 
		\end{aligned}
	\end{equation}
	where $p_{2z} = -h_2$ is used.
	The derivative of the inverse function satisfies
	\begin{equation}
		\begin{aligned}
			\frac{d \theta_0}{d \tilde{p}_{2z}} = -
			\frac{\frac{l_1(k - \sin \theta_0)}{g_1^3} + \frac{l_2 (k + \sin \theta_0)}{g_2^3}}{k \cos \theta_0 \left[\frac{4 l_1 l_2}{g_1^3 g_2^3} + 2 l_0 (\frac{l_2}{g_2^3}+\frac{l_1}{g_1^3})\right]}.
		\end{aligned}
	\end{equation}
	
	Finally, differentiating $t_0 \sin \theta_0$ with respect to $\tilde{p}_{2z}$ yields
	\begin{equation}
		\begin{aligned}
			\frac{d (t_0 \sin \theta_0)}{d \tilde{p}_{2z}} &= \frac{d t_0}{d \theta_0} \frac{d \theta_0}{d \tilde{p}_{2z}} \sin \theta_0 + \frac{d \sin \theta_0}{d \tilde{p}_{2z}} t_0\\
			&= -  \frac{d k}{d \theta_0} \frac{d \theta_0}{d \tilde{p}_{2z}}\frac{m_0 g \sin \theta_0}{k^2} + \frac{d \theta_0}{d \tilde{p}_{2z}} \frac{m_0 g \cos \theta_0}{k}\\
			&= - \frac{m_0 g}{k^3 \cos^2 \theta_0} \frac{\frac{l_1}{g_1^3} (k - \sin \theta_0)^2 + \frac{l_2}{g_2^3}(k + \sin \theta_0)^2 + 2 l_0 \sin^2 \theta_0}{\frac{4 l_1 l_2}{g_1^3 g_2^3} + 2 l_0 (\frac{l_1}{g_1^3} + \frac{l_2}{g_2^3})} < 0.
		\end{aligned}
	\end{equation}
	From which the function $\sigma (\cdot)$ is proved to be strictly increasing.
	Based on the constraint function \eqref{constraint2}, the following inequality can be deduced
	\begin{equation}
		\begin{aligned}
			\frac{s}{\cos \theta_0} - 2 l_0 &= \frac{l_1}{\sqrt{(k - \sin \theta_0)^2 + \cos^2 \theta_0}} + \frac{l_2}{\sqrt{(k + \sin \theta_0)^2 + \cos^2 \theta_0}}\\
			&\ge \frac{2 \sqrt{2 l_1 l_2}}{\sqrt{(k-\sin \theta_0)^2 + (k + \sin \theta_0)^2 + 2 \cos^2 \theta_0}}\\
			&= \frac{2 \sqrt{l_1 l_2}}{\sqrt{ k^2 + 1}}			
		\end{aligned}
	\end{equation}
	which implies
	\begin{equation}\label{inequality}
		k^2 \ge  \frac{4 l_1 l_2}{\left(\frac{s}{\cos \theta_0}- 2 l_0\right)^2} - 1.
	\end{equation}
	\begin{assumption}
		Consider the pipe suspended by two quadrotors by cables in the XZ plane of NED frame shown in Fig \ref{fig_22},
		the pitch angle of the pipe $\theta_0$ is assumed to satisfy the following bounded condition
		\begin{equation}
			-\frac{\pi}{2} < - \theta^* \le \theta_0 \le \theta^* < \frac{\pi}{2}
		\end{equation}
		and the cable angles $\alpha$ and $\beta$ are upper bounded by $\kappa$, i.e.,
		\begin{equation}
			\begin{aligned}
				0 < \alpha \le \kappa < \frac{\pi}{2}, \quad \quad 0 < \beta \le \kappa < \frac{\pi}{2},
			\end{aligned}
		\end{equation}
		where $\theta^*$ and $\kappa$ are positive constants.   
	\end{assumption}
	According to the inequality \eqref{inequality}, the horizontal distance $s$ can be adjusted to set the lower bound for $k$,
	i.e.,
	\begin{equation}
		k \ge \underline{k} = \sqrt{\frac{4 l_1 l_2}{\left(\frac{s}{\cos \theta^*} - 2 l_0\right)^2}-1}> 0
	\end{equation}
	where the lower bound $\underline{k}$ corresponds to the upper bound of the internal force $\overline{t}_0$.
	
	For the upper bound $\overline{\sigma}$ of $\frac{d \sigma(x)}{d x}$,
	\begin{equation}
		\begin{aligned}
			\frac{d \sigma (x)}{dx} = -\frac{d (t_0 \sin \theta_0)}{d \tilde{p}_{2z}}
			&=  \frac{m_0 g}{k^3 \cos^3 \theta_0} \frac{l_1 \sin \alpha \cos^2 \alpha + l_2 \sin \beta \cos^2 \beta + 2 l_0 \cos \theta_0 \sin^2 \theta_0}{\frac{4 l_1 l_2}{g_1^3 g_2^3} + 2 l_0 \left(\frac{l_1}{g_1^3} + \frac{l_2}{g_2^3}\right)} \\
			&\le \frac{m_0 g}{k^3 \cos^2 \theta_0} \frac{\left(l_1 \sin \alpha + l_2 \sin \beta + 2 l_0 \cos \theta_0\right) \cdot \max \{\cos^2 \alpha, \cos^2 \beta, \sin^2 \theta_0\}}{2 l_0 \left(\frac{l_1 \sin \alpha}{g_1^2} + \frac{l_2 \sin \beta}{g_2^2}\right)}\\
			&\le \frac{m_0 g}{k^3 \cos^2 \theta_0} \frac{s}{2 l_0 \cdot \frac{l_1 \sin \alpha + l_2 \sin \beta}{k^2 + 2 k +1} } \\
			&\le \frac{m_0 g}{k}\left(1 + \frac{1}{k}\right)^2 \frac{\frac{s}{\cos^2 \theta_0}}{2 l_0 (s - 2 l_0 \cos \theta_0)}\\
			&\le \frac{m_0 g}{\underline{k}} \left(1 + \frac{1}{\underline{k}}\right)^2 \frac{s}{2 l_0 \cos^2 \theta^* (s - 2 l_0)} = \overline{\sigma}
		\end{aligned}
	\end{equation}
	where $g_1^2 \le k^2 + 2 k + 1$ and $g_2^2 \le k^2 + 2 k + 1$ are used.
	\newline
	
	For the lower bound $\underline{\sigma}$ of $\frac{d \sigma(x)}{d x}$,
	\begin{equation}
		\begin{aligned}
			\frac{d \sigma (x)}{dx} = -\frac{d (t_0 \sin \theta_0)}{d \tilde{p}_{2z}}
			&=  \frac{m_0 g}{k^3 \cos^3 \theta_0} \frac{l_1 \sin \alpha \cos^2 \alpha + l_2 \sin \beta \cos^2 \beta + 2 l_0 \cos \theta_0 \sin^2 \theta_0}{\frac{4 l_1 l_2}{g_1^3 g_2^3} + 2 l_0 \left(\frac{l_1}{g_1^3} + \frac{l_2}{g_2^3}\right)} \\
			&\ge m_0 g \frac{l_1 \sin \alpha \cos^2 \alpha + l_2 \sin \beta \cos^2 \beta}{\frac{4 l_1 l_2 k^3 \cos^3 \theta_0}{g_1^3 g_2^3} + 2 l_0 \left(\frac{l_1 k^3 \cos^3 \theta_0}{g_1^3} + \frac{l_2 k^3 \cos^3 \theta_0}{g_2^3}\right)}\\
			&\ge m_0 g \frac{\left(l_1 \sin \alpha + l_2 \sin \beta \right) \cos^2 \max\{\alpha, \beta\}}{\frac{4 l_1 l_2}{\cos^3 \theta_0} + 2 l_0 l_1 + 2 l_0 l_2}\\
			&\ge m_0 g \frac{(s - 2 l_0)  \cos^2 \kappa}{\frac{4 l_1 l_2}{\cos^3 \theta^*} + 2 l_0 l_1 + 2 l_0 l_2} = \underline{\sigma} > 0
		\end{aligned}
	\end{equation}
	where $g_1 \ge k \cos \theta_0$ and $g_2 \ge k \cos \theta_0$ are used.
	\newline
	
	In this manner, $t_0 \sin \theta_0$ can be described as a monotonic function of $\tilde{p}_{2z}$, i.e.,
	\begin{equation}
		\sin \theta_0 = -\sigma \left(\tilde{p}_{2z}\right)
	\end{equation}
	where  $\sigma \left(\cdot\right): \mathbb{R} \to \mathbb{R}$ is a strictly increasing function with $\sigma \left(0\right) = 0$ and satisfies
	\begin{equation}
		\begin{aligned}
			0 <	 \underline{\sigma} < \frac{d \sigma (x)}{d x} < \overline{\sigma}.
		\end{aligned}
	\end{equation}
	Here $\underline{\sigma} =  m_0 g \frac{(s - 2 l_0)  \cos^2 \kappa}{\frac{4 l_1 l_2}{\cos^3 \theta^*} + 2 l_0 l_1 + 2 l_0 l_2}$ and $\overline{\sigma} =  \frac{m_0 g}{\underline{k}} \left(1 + \frac{1}{\underline{k}}\right)^2 \frac{s}{2 l_0 \cos^2 \theta^* (s - 2 l_0)}$.
\end{proof}

\end{document}